\documentclass{article}
\PassOptionsToPackage{dvipsnames,table}{xcolor}
\usepackage{aibl_arxiv}

\usepackage{url}

\usepackage{pifont}%

\usepackage{booktabs}
\usepackage{graphicx}
\usepackage{multirow}
\usepackage{comment}
\usepackage{subcaption}

\usepackage{bbold}

\usepackage{amsfonts,amsmath,amssymb,amsthm}
\usepackage{cleveref}
\usepackage{tocloft,titletoc}

\usepackage{fontawesome5}
\usepackage[dvipsnames]{xcolor}
\usepackage{algorithm, algorithmic}
\usepackage{wrapfig}
\usepackage{booktabs}
\usepackage{multirow}
\usepackage{colortbl}

\newtheorem{theorem}{Theorem}
\newtheorem{proposition}{Proposition}

\newtheorem{lemma}{Lemma}
\theoremstyle{definition}

\newcommand{\tr}{\operatorname{tr}}
\newcommand{\rank}{\operatorname{rank}}
\newcommand{\TV}{\mathrm{TV}}
\newcommand{\dd}{\,\mathrm{d}}
\newcommand{\norm}[1]{\lVert#1\rVert}

\newcommand{\ours}{\textsc{RCPT}}

\definecolor{oursrow}{RGB}{231,241,250}

\usepackage{amsmath,amsfonts,bm}

\def\eqref#1{equation~\ref{#1}}

\def\1{\bm{1}}

\DeclareMathAlphabet{\mathsfit}{\encodingdefault}{\sfdefault}{m}{sl}
\SetMathAlphabet{\mathsfit}{bold}{\encodingdefault}{\sfdefault}{bx}{n}

\newcommand{\E}{\mathbb{E}}

\newcommand{\Var}{\mathrm{Var}}

\newcommand{\Cov}{\mathrm{Cov}}

\AtBeginDocument{%
  \renewcommand{\eqref}[1]{\textup{(\ref{#1})}}%
}

\begin{document}

\aiblmaketitle
  {Risk-Calibrated Proposal Transport for Finite-Particle Diffusion Steering}
  {Ziseok Lee$^{1,\dagger}$, \ Jaehyeon Kim$^{1}$, \ Seungwon Kim$^{2,1}$, \ Seunghyun Moon$^{2,1,\dagger}$, \ Haneul Choi$^{2,1}$, \ Wooyeol Lee$^1$, \ Donghyun Koh$^{2,1}$, \ Minhyeong Lee$^{2,1,\dagger}$, \ Kyungsu Kim$^{1,\dagger}$}
  {$^{1}$ Seoul National University \quad $^{2}$ Asteromorph\\[0.15em]
   {\small $^{\dagger}$corresponding authors: \texttt{\{ziseoklee, minhyeong.lee, kyskim\}@snu.ac.kr, sh.moon@asteromorph.com}}}
  {Inference-time steering combines pretrained diffusion experts or rewards without retraining by changing the dynamics that transport noise to data. Feynman--Kac correction compensates for proposal mismatch through importance-weighted sequential Monte Carlo (SMC), whose finite-particle behavior depends on the proposal. Variance-controlling guidance (VCG) improves that proposal by fitting a linear drift correction to minimize empirical log-weight-rate variance. Although its population optimum cannot worsen residual variance, finite-particle VCG can nearly eliminate its fitting residual while increasing residual risk on new states by orders of magnitude. The resulting update can degrade unweighted generation or accelerate particle collapse. We show that the centered Feynman--Kac rate is the normalized transport residual and that expected out-of-fit benefit is exactly population headroom minus coefficient-estimation penalty. Under regularity assumptions, a Wasserstein analysis bounds the unweighted proposal's terminal error using this residual. These results motivate Risk-Calibrated Proposal Transport (RCPT), which uses deletion leave-one-out residuals to calibrate the retained fraction of the VCG update, adding no model calls and only small linear-algebra overhead. Experiments on 2D checker distributions, scaffold decoration, molecular property optimization, and class-conditional CIFAR-10 generation demonstrate recovery from harmful fitted updates. Across molecular and image domains, RCPT mitigates harmful fitted updates and improves a broad range of terminal metrics relative to uncalibrated VCG.}
  {
   \aiblresource{Lab}{https://aibl.snu.ac.kr}}

\section{Introduction}

Inference-time steering adapts pretrained diffusion or flow models to targets assembled from experts or rewards. These models transport noise toward data through a stochastic differential equation (SDE) \citep{song2021sde},
\[
\mathrm dX_t=v_t(X_t)\,\mathrm dt+\sigma_t\,\mathrm dW_t,
\qquad X_0\sim p_0,
\]
where $v_t$ is the drift, $\sigma_t$ is the noise amplitude, and $W_t$ is Brownian motion. An ordinary differential equation (ODE) sampler uses zero noise and an appropriate drift; steering changes the drift while keeping the pretrained models fixed.

A cheap guided proposal need not follow the requested target path. Combining expert fields, as in classifier-free guidance \citep{ho2021cfg}, gives an unweighted Anchor, but matching an instantaneous score does not ensure that the prescribed marginals satisfy the sampler's forward equation. Feynman--Kac correction (FKC) and Adaptive Path Correction with Exponents (ACE) account for this mismatch through importance weights and SMC \citep{skreta2025fkc,lee2026collapse}. Their target recovery is asymptotic: finite-particle resampling can deplete ancestral diversity, while unweighted deployment retains proposal bias.

Proposal fitting aims to improve both deployment modes by estimating a drift correction from an ensemble of samples, or particles. DriftLite's variance-controlling guidance (VCG) minimizes FK log-weight-rate variance in a linear field class, while energy-controlling guidance (ECG) fits gradient fields through an energy objective \citep{ren2026driftlite}. Population VCG cannot increase residual variance relative to Anchor because its class includes the zero correction. With finite particles, that benefit depends on whether the estimated coefficients generalize.

Limited fitting information can reverse this benefit. We focus on budgets below 100 fitting particles, with larger populations testing how calibration responds as information grows. When effective information approaches controller rank, VCG can nearly eliminate its fitting residual yet increase independent residual risk by orders of magnitude (Figure~\ref{fig:fit-deploy-main}). Increasing the fitting population costs additional expert evaluations, motivating calibration of the available update.

We treat proposal fitting as a prediction problem whose benefit must survive deployment. The centered FK rate is the normalized transport residual, and expected out-of-fit improvement equals population headroom minus coefficient-estimation penalty. This separates the improvement available to an ideal controller from its estimation cost. A Wasserstein analysis bounds unweighted terminal proposal error through the residual, which also controls leading local correction-weight inflation.

Risk-Calibrated Proposal Transport (RCPT) estimates this tradeoff and retains a scalar fraction of the VCG update. Deletion leave-one-out residuals estimate the residual scale, while a structured penalty accounts for controller complexity relative to fitting information. RCPT adds no model calls and only small linear-algebra overhead to VCG proposal fitting. It supports frozen or online deployment and retains more of VCG as fitting information grows.

Across molecular and image domains, RCPT mitigates harmful fitted updates and improves terminal metrics relative to baselines including uncalibrated VCG (Tables~\ref{tab:scaffold-simple-main}--\ref{tab:image-main}).

Our contributions are:
\begin{itemize}
\item \textbf{A finite-particle failure mode hidden by population guarantees.}
We formulate proposal fitting as an out-of-fit prediction problem and expose fit-to-deployment inversion: empirical VCG can nearly eliminate fitting residual while increasing independent risk and degrading unweighted or corrected generation. We connect the centered FK residual to a Wasserstein terminal-error bound and leading local importance-weight inflation.
\item \textbf{Deployment-risk theory and nearly cost-free calibration.}
We derive an exact headroom--penalty decomposition, an asymptotic estimation penalty, and a bias-aware scalar oracle. RCPT instantiates this principle with structured deletion-LOO calibration, no additional expert calls, and a conditional bound on its decision error.
\item \textbf{Deployment-scale evidence across inference modes and domains.}
Controlled 10,000-particle deployments and common-fit regularization audits test generalization directly. A complete 76-pocket scaffold benchmark, eight-seed molecular SMC protocols, and CIFAR-10 generation show that calibration repairs harmful VCG fits and improves downstream quality, with restored genealogy under corrected molecular sampling.
\end{itemize}

\begin{figure}[H]
\centering
\includegraphics[width=0.60\textwidth]{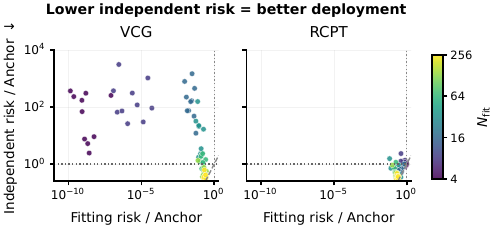}
\caption{\textbf{Low fitting risk can hide high deployment risk.} Lower on the vertical axis means better independent risk; a smaller fitting residual alone is insufficient. Both axes show log-scaled ratios to Anchor; dotted lines mark Anchor. Color denotes fitting population.}
\label{fig:fit-deploy-main}
\end{figure}

\section{Related Work}
\label{sec:positioning}

\paragraph{Diffusion steering and Anchor proposals.}
Score composition and reward guidance construct cheap proposals from pretrained models \citep{ho2021cfg,chung2025cfg,DuPMLR3Rpoe,projectdmpoe,song2023lgd,yu2023freedomsmc_,bansal2023universal,ye2024tfg}. FKC and ACE correct prescribed paths through importance weighting and SMC \citep{skreta2025fkc,lee2026collapse,doucet2000sequentialsmc,elementsofsmc}. Related particle methods support diffusion sampling, composition, and discrete correction \citep{wu2023smc,singhal2025smc,compositionsmc,reversesmc,hasan2026discretefeynmankaccorrectors,lee2025debiassmc_}. Finite-particle costs and ancestral collapse motivate improving the proposal before correction \citep{chatterjee2018sample,beskos2014stability,agapiou2017importance,koskela2020genealogies}. RCPT keeps the target fixed and calibrates a fitted update to Anchor; Table~\ref{tab:six-baseline-2d} compares both unweighted and SMC deployment.

\paragraph{Proposal fitting: VCG and ECG.}
Our closest baselines are DriftLite's \textbf{VCG}, which minimizes FK rate variance, and \textbf{ECG}, which fits gradient corrections through an energy objective \citep{ren2026driftlite}. Proposal fitting improves sampling through adaptive importance sampling, controlled or twisted SMC, and diffusion or incremental-divergence objectives \citep{cornebise2008adaptive,PerformanceBoundsforParticleFiltersUsingtheOptimalProposalsmc,heng2020controlled,whiteley2014twisted,vargas2024cmcd,phillips2024pdds,kim2025tuning}. Optimal discrete proposals, tree sampling, trust-region twists, and guidance-variance control offer complementary mechanisms \citep{ou2026optimal,jain2025tree,wang2026trust,kim2026vgas}; biased-proposal theory separates approximation and particle errors \citep{strasman2026biased}. RCPT addresses the estimation error of VCG's fitted coefficient; Table~\ref{tab:six-baseline-2d} compares VCG, ECG, and RCPT on the same Checker deployments.

\paragraph{Regression regularization and calibration.}
Regularized and selected Stein control variates address unstable regressions \citep{south2023regularized,leluc2021selection}, while classical shrinkage and covariance penalties quantify prediction error \citep{efron1973stein,efron2004covariance}. RCPT's contribution is to identify and control the deployment risk of a regression coefficient that changes generative dynamics. Ridge penalizes coefficients and principal-component regression (PCR) truncates fitted directions; RCPT instead calibrates one amplitude using structured headroom and penalty estimates. Direct LOO substitutes a cross-validated amplitude for this structured rule and serves as a calibration ablation. Table~\ref{app:tab:ray-risk} and the paired comparisons in Section~\ref{sec:discussion} show lower mean reference risk for RCPT than Direct LOO and LOO-selected ridge/PCR in the common-fit Checker audit.

\section{Finite-Particle Proposal Risk}
\label{sec:preliminaries}
\label{sec:theory}

\paragraph{Target path and Markov proposal.}
Let $\widetilde p_t>0$ be a differentiable unnormalized density,
$Z_t=\int\widetilde p_t(x)\,\dd x$, and $p_t=\widetilde p_t/Z_t$ for
$t\in[0,1]$.  In the applications, where $q^{(i)}_t$ are expert densities, $R_{j,t}$ are reward functions, and $\gamma_i(t), \beta_{j,t}$ are time-varying exponents, the target path is
\begin{equation}
\widetilde p_t(x)=
\prod_{i=1}^{I}q_t^{(i)}(x)^{\gamma_i(t)}
\exp\!\left\{\sum_{j=1}^{J}\beta_{j,t}R_{j,t}(x)\right\},
\label{eq:target-path}
\end{equation}
which includes products, ratios, time-varying exponents, and reward tilting.
Let $\mathcal L_{t,v}^{*}$ be the conservative forward Kolmogorov operator of
a proposal with selectable drift $v_t$, and let $U^v_{s,t}$ be its Markov
evolution on observables.  The unweighted proposal law $\rho_t^v$ satisfies
$\partial_t\rho_t^v=\mathcal L_{t,v}^{*}\rho_t^v$ and
$\rho_0^v=p_0$.  The corresponding raw Feynman--Kac rate is
\begin{equation}
g_t^v(x)=
\frac{\partial_t\widetilde p_t(x)-
\mathcal L_{t,v}^{*}\widetilde p_t(x)}{\widetilde p_t(x)}.
\label{eq:raw-potential}
\end{equation}
Proposal-Only deployment discards this correction; FKC/ACE retains its
incremental weights and a stated resampling rule.  RCPT changes neither the
target path nor the exponent schedule.

\paragraph{Population VCG is a projection.}
For Anchor drift $v_t^0$ and $m$ fixed fields $\phi_{j,t}$, fit an additive drift
$v_t^\lambda=v_t^0+\sum_{j=1}^m\lambda_{j,t}\phi_{j,t}$.
With unchanged diffusion, the FK rate becomes $g_t^\lambda=g_t^0+a_t^\top\lambda_t$, where
$a_{j,t}=\mathcal T_{p_t}\phi_{j,t}$ and
$\mathcal T_{p_t}\phi=p_t^{-1}\nabla\!\cdot(p_t\phi)$ is the Stein transport operator.
Define the population residual risk $V_t(\lambda)=\Var_{p_t}(g_t^\lambda)$,
$C_t=\Cov_{p_t}(a_t,a_t)$, and $d_t=\Cov_{p_t}(a_t,g_t^0)$.
The covariance-weighted seminorm
$\|z\|_{C_t}^2=z^\top C_tz=\Var_{p_t}(a_t^\top z)$ measures how much a coefficient change affects the residual. This is the \emph{$C_t$ geometry}: errors are weighted by their effect on residual variance, rather than by their Euclidean size.
The minimum-norm population VCG coefficient is
$\lambda_t^\star=-C_t^\dagger d_t$, where $\dagger$ denotes the Moore--Penrose pseudoinverse \citep{ren2026driftlite}. Writing $V_{t,\star}=V_t(\lambda_t^\star)$ gives
\begin{equation}
V_t(\lambda)=V_{t,\star}+\|\lambda-\lambda_t^\star\|_{C_t}^{2}.
\label{eq:oracle-identity}
\end{equation}
Appendix~\ref{app:proof:projection} gives the derivation, including singular $C_t$.

\subsection{Headroom and coefficient-estimation error}

We define out-of-fit risk by evaluating on the target law and averaging over fitting samples. At time $t$, sample $D$ produces $\widehat\lambda_t(D)$ with expected prediction risk $\mathcal R_t(\widehat\lambda_t)=\E_D V_t(\widehat\lambda_t(D))$. Thus $V_t$ evaluates a fixed coefficient, while $\mathcal R_t$ also averages its estimation variability. We now fix $t$ and omit time subscripts in the coefficient-risk analysis below.
\begin{theorem}[Exact out-of-fit risk decomposition]
\label{thm:out-of-fit}
For a fixed target law and field dictionary, assume
$\E_D\|\widehat\lambda\|_C^2<\infty$.
Define population headroom $H=\|\lambda^\star\|_C^2$ and coefficient-estimation penalty
$P=\E_D\|\widehat\lambda-\lambda^\star\|_C^2$. Then
\begin{equation}
\mathcal R(\widehat\lambda)-V(0)=P-H.
\label{eq:exact-out-of-fit}
\end{equation}
\end{theorem}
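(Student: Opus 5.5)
The plan is to apply the oracle identity \eqref{eq:oracle-identity} twice, once at the fitted coefficient and once at the zero coefficient, and subtract. For each realization of the fitting sample $D$, the coefficient $\widehat\lambda(D)$ is a fixed vector, so \eqref{eq:oracle-identity} gives
\[
V(\widehat\lambda(D))=V_\star+\|\widehat\lambda(D)-\lambda^\star\|_C^2 .
\]
This is pointwise in $D$ because $V$ evaluates a fixed coefficient under the target law, and $D$ enters only through $\widehat\lambda$.

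Taking $\E_D$ of this identity gives $\mathcal R(\widehat\lambda)=V_\star+P$. We need finiteness to split the expectation. The triangle inequality for the seminorm gives
\[
\|\widehat\lambda-\lambda^\star\|_C^2\le 2\|\widehat\lambda\|_C^2+2\|\lambda^\star\|_C^2 .
\]
The first term has finite expectation by assumption. The second is finite because $\|\lambda^\star\|_C^2=d^\top C^\dagger C C^\dagger d=d^\top C^\dagger d$ is a finite constant. Hence $P<\infty$, and measurability of $D\mapsto\widehat\lambda(D)$ is implicit in the definition of $\mathcal R$.

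Next, evaluating \eqref{eq:oracle-identity} at $\lambda=0$ gives $V(0)=V_\star+\|\lambda^\star\|_C^2=V_\star+H$. Subtracting this from the previous display yields $\mathcal R(\widehat\lambda)-V(0)=P-H$, which is \eqref{eq:exact-out-of-fit}.

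The only step needing care is that \eqref{eq:oracle-identity} holds for all $\lambda$ even when $C$ is singular. This is established in Appendix~\ref{app:proof:projection}, so it can be cited. If I had to check it directly, I would argue as follows. Since $V(\lambda)=V(0)+2d^\top\lambda+\lambda^\top C\lambda$, the identity reduces to showing $d\in\operatorname{range}(C)$. That holds because $d=\Cov(a,g^0)$ is orthogonal to $\ker C$: if $Cz=0$, then $a^\top z$ is $p$-a.s. constant, so $z^\top d=\Cov(a^\top z,g^0)=0$. With $d=CC^\dagger d$, completing the square gives the identity with $\lambda^\star=-C^\dagger d$. This also requires $g^0$ and $a$ to be square integrable under $p$, which is implicit in $V$ being defined.
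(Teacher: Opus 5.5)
Your proposal is correct and follows the paper's proof: apply Eq.~\eqref{eq:oracle-identity} to each realized fit, take $\E_D$ to get $\mathcal R(\widehat\lambda)=V_\star+P$, evaluate at $\lambda=0$ to get $V(0)=V_\star+H$, and subtract. Your integrability check and your argument that $d\in\operatorname{range}(C)$ when $C$ is singular match the paper's projection appendix.
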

Taking expectation in Eq.~\eqref{eq:oracle-identity} proves the identity. The penalty includes both coefficient variance and squared bias, so it does not require unbiased fitting. The full estimated update harms expected residual risk precisely when $P>H$. Here, estimation error exceeds the available improvement.

\subsection{Scaling the fitted correction to minimize risk}

Scaling $\widehat\lambda$ by $\alpha\in[0,1]$ interpolates between Anchor and the full VCG fit. The optimal fixed amplitude balances retained signal against estimation error.
\begin{theorem}[Expected-risk oracle and the structured case]
\label{thm:shrinkage}
Let $B=(\lambda^\star)^\top C\E_D(\widehat\lambda-\lambda^\star)$ measure alignment between mean estimation error and the population coefficient. For deterministic $\alpha\in[0,1]$,
\begin{equation}
\mathcal R(\alpha\widehat\lambda)
=V_\star+(1-\alpha)^2H+\alpha^2P-2\alpha(1-\alpha)B.
\label{eq:risk-calibration}
\end{equation}
If $H+P+2B>0$, the minimizing amplitude is
\begin{equation}
\alpha^\star=\Pi_{[0,1]}\!\left(\frac{H+B}{H+P+2B}\right),
\label{eq:exact-alpha}
\end{equation}
where $\Pi_{[0,1]}$ clips to $[0,1]$; if $H+P+2B=0$, all amplitudes have the same risk. When $B=0$ and $H+P>0$, this becomes $\alpha_0=H/(H+P)$, with
\begin{equation}
\mathcal R(\alpha_0\widehat\lambda)-V(0)=-\frac{H^2}{H+P}.
\label{eq:structured-oracle-gain}
\end{equation}
\end{theorem}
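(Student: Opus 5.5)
The plan is to work entirely from the population identity Eq.~\eqref{eq:oracle-identity}, $V(\lambda)=V_\star+\|\lambda-\lambda^\star\|_C^2$. This identity holds for every fixed coefficient, including singular $C$. We apply it with $\lambda=\alpha\widehat\lambda(D)$ and then take $\E_D$. The integrability assumption $\E_D\|\widehat\lambda\|_C^2<\infty$ makes all terms below finite. Since $\|\cdot\|_C$ is a seminorm induced by the positive semidefinite bilinear form $\langle u,w\rangle_C=u^\top Cw$, we can expand it exactly like a Euclidean norm.

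\textbf{Step 1: expand the error.} Write
\[
\alpha\widehat\lambda-\lambda^\star=\alpha(\widehat\lambda-\lambda^\star)-(1-\alpha)\lambda^\star .
\]
Expanding $\|\cdot\|_C^2$ gives
\[
\alpha^2\|\widehat\lambda-\lambda^\star\|_C^2+(1-\alpha)^2\|\lambda^\star\|_C^2-2\alpha(1-\alpha)(\lambda^\star)^\top C(\widehat\lambda-\lambda^\star).
\]
Because $\alpha$ is deterministic, taking expectations term by term yields $\alpha^2P+(1-\alpha)^2H-2\alpha(1-\alpha)B$. The cross term is handled by linearity, $\E_D(\lambda^\star)^\top C(\widehat\lambda-\lambda^\star)=B$, and it is finite by Cauchy--Schwarz in $\langle\cdot,\cdot\rangle_C$. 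This proves Eq.~\eqref{eq:risk-calibration}.

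\textbf{Step 2: optimize over $\alpha$.} Collect powers of $\alpha$:
\[
f(\alpha)=(H+P+2B)\alpha^2-2(H+B)\alpha+H .
\]
If $H+P+2B>0$, $f$ is a strictly convex quadratic with unconstrained minimizer $(H+B)/(H+P+2B)$. Minimizing a convex one-dimensional function over the interval $[0,1]$ gives the projection of that minimizer onto the interval, which is Eq.~\eqref{eq:exact-alpha}.

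For the degenerate case, note that $H+P+2B=\E_D\|\widehat\lambda\|_C^2\ge0$, since it is the $\alpha=1$ value of the expanded square. So the leading coefficient is never negative. When it vanishes, Cauchy--Schwarz gives $|B|\le\sqrt{HP}$, hence $H+P-2\sqrt{HP}\le H+P+2B=0$. This forces $\sqrt H=\sqrt P$ and $B=-H$, so $H+B=0$ and $f$ is constant. Establishing this constancy is the only step needing care. The route is to show $\E_D\|\widehat\lambda\|_C^2=0$, so that $\widehat\lambda$ lies almost surely in the $C$-null direction, and then read off constancy of $f$ directly.

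\textbf{Step 3: the structured case.} Set $B=0$ with $H+P>0$. Then $\alpha_0=H/(H+P)$ already lies in $[0,1]$, so no clipping occurs. Substituting into $f$ and using $V(0)=V_\star+H$ (Eq.~\eqref{eq:oracle-identity} at $\lambda=0$), we get
\[
\mathcal R-V(0)=\frac{HP^2+PH^2}{(H+P)^2}-H=\frac{HP}{H+P}-H=-\frac{H^2}{H+P},
\]
which is Eq.~\eqref{eq:structured-oracle-gain}. Nothing here is hard; the main obstacle is only the bookkeeping in the degenerate case, together with confirming that the expansion remains valid when $C$ is singular, which it does because it uses only bilinearity.
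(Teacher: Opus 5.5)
Your proposal is correct and follows the paper's proof essentially line for line. You expand $\E_D\|\alpha(\widehat\lambda-\lambda^\star)-(1-\alpha)\lambda^\star\|_C^2$ into $(1-\alpha)^2H+\alpha^2P-2\alpha(1-\alpha)B$, minimize the convex quadratic with clipping, and substitute $\alpha_0=H/(H+P)$ using $V(0)=V_\star+H$.

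In the degenerate case, your Cauchy--Schwarz bound $|B|\le\sqrt{HP}$, which forces $H+B=0$ when $H+P+2B=0$, is a tidy alternative to the paper's remark that $\E_D\|\widehat\lambda\|_C^2=0$ makes the fitted correction almost surely constant. One small slip: $H+P+2B=\E_D\|\lambda^\star+(\widehat\lambda-\lambda^\star)\|_C^2$ is the $\alpha^2$-coefficient of your expansion, not its value at $\alpha=1$ (that value is $P$).
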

The oracle retains more of the fit when headroom is large relative to estimation error and $B=0$. Unbiased fitting is sufficient for $B=0$; more generally, only its mean error's alignment with $\lambda^\star$ under $C$ must vanish. The fraction $H/(H+P)$ motivates RCPT's structured rule. Section~\ref{sec:calibration-analysis} analyzes data-selected amplitudes, and Appendix~\ref{app:sec:structured-interpretation} quantifies sensitivity to nonzero $B$.

\section{Risk-Calibrated Proposal Transport}
\label{sec:method}
\label{sec:calibration-analysis}

RCPT uses a structured estimate of prediction risk to select the strength of the fitted VCG correction.
\paragraph{Candidate fit.}
Given $\{a_n,g_n^0,\omega_n\}_{n=1}^N$ with $0\leq\omega_n<1$ and
$\sum_n\omega_n=1$, let $\bar a=\sum_n\omega_na_n$,
$\bar g^0=\sum_n\omega_ng_n^0$, and form the weighted centered moments
$\widehat C$ and $\widehat d$. The coefficient
\begin{equation}
\widehat\lambda=-(\widehat C+\eta I)^\dagger\widehat d
\label{eq:vcg-regression}
\end{equation}
minimizes $\sum_n\omega_n(g_n^0+a_n^\top\lambda-c)^2+\eta\|\lambda\|_2^2$
over $(\lambda,c)$. This is the VCG fit reused by RCPT.

\paragraph{Structured penalty and headroom.}
Delete observation $n$, renormalize $\omega_i^{(-n)}=\omega_i/(1-\omega_n)$,
and refit with the same ridge convention. With the deleted weighted means,
\begin{equation}
e_n=g_n^0+a_n^\top\widehat\lambda^{(-n)}-\widehat c^{(-n)},\qquad
\widehat c^{(-n)}=\bar g_{-n}^0+\bar a_{-n}^\top\widehat\lambda^{(-n)}.
\label{eq:deletion-loo-residual}
\end{equation}
Set $\widehat\sigma_{\rm LOO}^2=\sum_n\omega_ne_n^2$,
$\widehat N_{\rm eff}=(\sum_n\omega_n^2)^{-1}$,
$\widehat r_{\rm eff}=\tr[\widehat C(\widehat C+\eta I)^\dagger]$, and
$S=\widehat\lambda^\top\widehat C\widehat\lambda$. RCPT uses
\begin{equation}
\widehat P=\widehat\sigma_{\rm LOO}^2
\frac{\widehat r_{\rm eff}}{\widehat N_{\rm eff}},\qquad
\widehat H=[S-\widehat P]_+,\qquad
\widehat\alpha=\frac{\widehat H}{\widehat H+\widehat P+\varepsilon_{\rm num}},
\qquad\widetilde\lambda=\widehat\alpha\widehat\lambda,
\label{eq:practical-risk-calibration}
\end{equation}
where $\varepsilon_{\rm num}>0$ stabilizes a zero denominator.
The rank-over-information penalty is motivated by the classical regression
expansion $P=N^{-1}\tr(\Omega C^\dagger)+o(N^{-1})$, which reduces to
$\sigma_\epsilon^2\rank(C)/N$ when $\Omega=\sigma_\epsilon^2C$
(Lemma~\ref{lem:finite-particle}, with proof in
Appendix~\ref{app:sec:finite-proof}). The effective-rank and normalized-weight factors give a scalar approximation to this covariance penalty. They retain a low-dimensional decision when the fitting ensemble is small.

\paragraph{The criterion implemented by RCPT.}
Equation~\eqref{eq:practical-risk-calibration} is exactly
\begin{equation}
\widehat\alpha=\underset{0\leq\alpha\leq1}{\arg\min}\,
\widehat\Delta_{\rm S}(\alpha),\qquad
\widehat\Delta_{\rm S}(\alpha)
=(S+\varepsilon_{\rm num})\alpha^2-2(S-\widehat P)\alpha.
\label{eq:structured-criterion}
\end{equation}

This quadratic gives a direct operational meaning to the rule. $S$ measures the fitted correction's empirical variation, and $\widehat P$ discounts the supported improvement using deleted prediction residuals. The positive part returns Anchor when the estimated penalty exhausts the empirical gain. As supported gain grows relative to penalty, the rule retains more of the update.
\begin{algorithm}[t]
\caption{Risk-Calibrated Proposal Transport (structured $H,P$)}
\label{alg:rcpt}
\textbf{Input:} target path, Anchor, basis, weighted ensemble, ridge $\eta$.
\begin{enumerate}
\setlength{\itemsep}{0pt}
\item Fit $\widehat\lambda$ by Eq.~\eqref{eq:vcg-regression}.
\item Compute deletion residuals by Eq.~\eqref{eq:deletion-loo-residual}.
\item Form $\widehat P$, $\widehat H$, and $\widehat\alpha$ by Eq.~\eqref{eq:practical-risk-calibration}.
\item Deploy $v^0+\sum_j\widehat\alpha\widehat\lambda_j\phi_j$ online, or freeze the schedule for independent deployment.
\end{enumerate}
\end{algorithm}
\textbf{Deployment and computation.}
A frozen schedule can be reused on independent trajectories, while an online controller recalibrates at each fitted step. Proposal-Only sampling uses the selected drift directly; corrected SMC additionally retains FK weights and resampling. The implementation uses the same feature values and divergences as VCG. Additional work consists of small dense deletion-refit systems. Coefficient clipping and numerical fallbacks are specified separately in Appendix~\ref{app:sec:implementation}.

The fitted correction defines a quadratic risk along its direction. RCPT approximates this quadratic using $S$ and $\widehat P$, and their errors control the selected risk.
\begin{theorem}[Calibration error of the implemented rule]
\label{thm:structured-calibration}
Freeze a realized fit with finite residual second moments and write $f=a^\top\widehat\lambda$,
$q=\Var_p(f)$, and $\kappa=\Cov_p(g^0+f,f)$. If
$|S-q|\leq e_q$ and $|\widehat P-\kappa|\leq e_\kappa$, then RCPT satisfies
\begin{align}
V(\widehat\alpha\widehat\lambda)-V(0)
&\leq e_q+2e_\kappa,\label{eq:structured-anchor-bound}\\
V(\widehat\alpha\widehat\lambda)
-\min_{\alpha\in[0,1]}V(\alpha\widehat\lambda)
&\leq2e_q+4e_\kappa+\varepsilon_{\rm num}.
\label{eq:structured-oracle-bound}
\end{align}
\end{theorem}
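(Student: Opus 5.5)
Along the fixed direction $\widehat\lambda$ the true risk is an exact quadratic in $\alpha$, and RCPT minimizes a perturbed copy of it. The plan is to compare the two quadratics coefficient by coefficient.

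\textbf{Exact quadratic.} Since $g^{\alpha\widehat\lambda}=g^0+\alpha f$,
\[
V(\alpha\widehat\lambda)-V(0)=\alpha^2 q+2\alpha\Cov_p(g^0,f)=\alpha^2 q-2\alpha(q-\kappa)=:\Delta(\alpha).
\]
The second equality uses $\Cov_p(g^0,f)=\kappa-q$, which follows from the definition $\kappa=\Cov_p(g^0+f,f)$. The finite second-moment assumption makes all of these quantities finite.

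\textbf{Comparison with the implemented criterion.} Eq.~\eqref{eq:structured-criterion} gives
\[
\widehat\Delta_{\rm S}(\alpha)=(S+\varepsilon_{\rm num})\alpha^2-2(S-\widehat P)\alpha .
\]
Subtracting,
\[
\widehat\Delta_{\rm S}(\alpha)-\Delta(\alpha)=\alpha^2(S-q+\varepsilon_{\rm num})-2\alpha\bigl[(S-q)-(\widehat P-\kappa)\bigr].
\]
For $\alpha\in[0,1]$ we have $\alpha^2\le\alpha\le1$, so by the triangle inequality the difference lies between $-(3e_q+2e_\kappa)$ and $3e_q+2e_\kappa+\varepsilon_{\rm num}$. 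These crude bounds are not sharp enough for the stated constants. The decisive step is therefore to track signs and the factor $\alpha^2-2\alpha$, instead of bounding each term separately.

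\textbf{Proof of \eqref{eq:structured-anchor-bound}.} I use optimality at $\widehat\alpha$ directly. The closed form shows $\widehat\alpha=(S-\widehat P)_+/(S+\varepsilon_{\rm num})$.
\begin{itemize}
\item If $\widehat\alpha=0$, then $\Delta(\widehat\alpha)=0$ and the bound holds trivially.
\item Otherwise write $s=S-\widehat P>0$ and $\widehat\alpha=s/(S+\varepsilon_{\rm num})$. Then
\[
\Delta(\widehat\alpha)=\widehat\alpha^2 q-2\widehat\alpha(q-\kappa)=\widehat\alpha^2(q-S)+\widehat\alpha^2 S-2\widehat\alpha(q-\kappa).
\]
Since $\widehat\alpha^2 S\le\widehat\alpha^2(S+\varepsilon_{\rm num})=\widehat\alpha s$, this gives
\[
\Delta(\widehat\alpha)\le\widehat\alpha^2(q-S)+\widehat\alpha\bigl(s-2q+2\kappa\bigr).
\]
Now write $s-2q+2\kappa=(S-q)-(\widehat P-\kappa)+(\kappa-q)$. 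I expect the main obstacle here: the leftover term $\widehat\alpha(\kappa-q)$ must be absorbed. My first attempt is to bound it via $s\le q-\kappa+e_q+e_\kappa$, which holds because $s=(S-q)+(q-\kappa)+(\kappa-\widehat P)$. A further fallback is to note that the coefficient $\widehat\alpha-\widehat\alpha^2$ multiplying $q-S$ has absolute value at most $1/4$, which leaves slack to reach $e_q+2e_\kappa$.
\end{itemize}

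\textbf{Proof of \eqref{eq:structured-oracle-bound}.} I use the standard argmin perturbation argument. Let $\alpha^\dagger$ minimize $\Delta$ on $[0,1]$ and let $E(\alpha)=\widehat\Delta_{\rm S}(\alpha)-\Delta(\alpha)$. Then
\[
\Delta(\widehat\alpha)-\Delta(\alpha^\dagger)=\bigl[\widehat\Delta_{\rm S}(\widehat\alpha)-\widehat\Delta_{\rm S}(\alpha^\dagger)\bigr]-E(\widehat\alpha)+E(\alpha^\dagger)\le\sup_\alpha E-\inf_\alpha E,
\]
because the bracket is nonpositive by the optimality of $\widehat\alpha$. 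The sign-aware form of $E$ gives
\[
E(\alpha)=(\alpha^2-2\alpha)(S-q)+\alpha^2\varepsilon_{\rm num}+2\alpha(\widehat P-\kappa).
\]
Since $|\alpha^2-2\alpha|\le1$ and $0\le 2\alpha\le2$ on $[0,1]$, the oscillation of $E$ is at most $2e_q+4e_\kappa+\varepsilon_{\rm num}$, which is exactly the stated constant. This also suggests the sharper route to \eqref{eq:structured-anchor-bound}: compare $\widehat\alpha$ against the candidate $\alpha=0$, where $E(0)=0$ and $\Delta(0)=0$. This gives $\Delta(\widehat\alpha)\le-E(\widehat\alpha)\le e_q+2e_\kappa$, again using $|\alpha^2-2\alpha|\le1$ and $2\alpha\le2$, while the $\varepsilon_{\rm num}$ term enters with a favourable sign. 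I would present this cleaner argument in place of the case analysis above.
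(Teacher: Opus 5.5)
Your final argument is correct and is essentially the paper's proof: write the exact ray quadratic $\Delta(\alpha)=q\alpha^2-2(q-\kappa)\alpha$, isolate the criterion error, then compare $\widehat\alpha$ by optimality against $\alpha=0$ (where $E(0)=0$) and against a minimizer of $\Delta$ on $[0,1]$, using $|\alpha^2-2\alpha|\le1$; the paper does the same after bounding $|\widehat\Delta_{\rm S}-\Delta-\varepsilon_{\rm num}\alpha^2|\le e_q+2e_\kappa$ uniformly. Delete the unfinished case analysis, which the comparison with $\alpha=0$ supersedes; as a side remark, since $\alpha^2-2\alpha\in[-1,0]$ and $2\alpha\in[0,2]$ are one-signed on $[0,1]$, your oscillation argument actually gives the sharper oracle constant $e_q+2e_\kappa+\varepsilon_{\rm num}$.
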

For $q>0$, the ray oracle is $\alpha_{\rm ray}=\Pi_{[0,1]}(1-\kappa/q)$; when it is interior, excess risk is $q(\widehat\alpha-\alpha_{\rm ray})^2$ (Appendix~\ref{app:proof:structured}). This curvature-weighted regret measures amplitude-selection error. The Checker audit evaluates it against an independent corrected-SMC reference law (Appendix~\ref{app:sec:ray-audit}).

\subsection{Residual risk upper-bounds Wasserstein error}
\label{sec:residual-wasserstein}

The centered FK rate is the source that separates the proposal from the target path. Its minimum correction energy preserves the spatial geometry of this source and gives a direct $W_2$ bound.
\begin{theorem}[FK residual and Wasserstein transport]
\label{thm:terminal}
Under Appendix~\ref{app:sec:analytic-setup}'s conservative evolution and regularity assumptions, define $r_t^v=g_t^v-\E_{p_t}g_t^v$. Then $\partial_t\log Z_t=\E_{p_t}g_t^v$ and $\partial_tp_t=\mathcal L_{t,v}^{*}p_t+p_tr_t^v$.
Suppose additionally that the diffusion on $\mathbb R^d$ is state-independent, the laws have finite second moments, and
$\langle x-y,v_t(x)-v_t(y)\rangle\le L_t\|x-y\|^2$ with $L\in L^1(0,1)$.
Let $-\nabla\cdot(p_t\nabla\psi_t)=p_tr_t^v$ have finite energy
$\mathcal E_t(v)=\E_{p_t}\|\nabla\psi_t\|^2$, with its corrected dynamics well posed. For $\rho_0^v=p_0$,
\begin{equation}
W_2(p_1,\rho_1^v)
\le\int_0^1 K_v(t,1)\sqrt{\mathcal E_t(v)}\,\dd t,
\qquad K_v(t,1)=\exp\!\left\{\int_t^1L_s\,\dd s\right\}.
\label{eq:w2-main}
\end{equation}
If $p_t$ has Poincar\'e constant $C_P(t)$, then
$\mathcal E_t(v)\le C_P(t)V_t(v)$, where $V_t(v)=\E_{p_t}(r_t^v)^2$.
\end{theorem}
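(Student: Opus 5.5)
The plan has three parts: two algebraic identities, the Wasserstein bound via a synchronous coupling, and the Poincaré step.

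\textbf{Identities.} Write $\widetilde p_t=Z_tp_t$. Since $\mathcal L_{t,v}^{*}$ is linear, the definition \eqref{eq:raw-potential} gives
\[
\partial_t\widetilde p_t=\mathcal L_{t,v}^{*}\widetilde p_t+\widetilde p_t g_t^v .
\]
Dividing by $Z_t$ yields $\partial_tp_t+p_t\,\partial_t\log Z_t=\mathcal L_{t,v}^{*}p_t+p_tg_t^v$. We then integrate over $\mathbb R^d$. Conservativity means $\int\mathcal L_{t,v}^{*}p_t=0$, and $\int\partial_tp_t=0$ because $\int p_t=1$. Under the appendix regularity assumptions, which justify exchanging the integral and $\partial_t$, this gives $\partial_t\log Z_t=\E_{p_t}g_t^v$. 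Substituting back produces $\partial_tp_t=\mathcal L_{t,v}^{*}p_t+p_tr_t^v$.

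\textbf{Wasserstein bound.} Let $\psi_t$ solve $-\nabla\cdot(p_t\nabla\psi_t)=p_tr_t^v$. The source term can then be absorbed as a drift:
\[
\partial_tp_t=\mathcal L_{t,v}^{*}p_t-\nabla\cdot(p_t\nabla\psi_t).
\]
So $p_t$ is the law of the corrected SDE $\mathrm dY_t=(v_t(Y_t)+\nabla\psi_t(Y_t))\,\mathrm dt+\sigma_t\,\mathrm dW_t$, started at $Y_0\sim p_0$; the assumed well-posedness makes this rigorous. The proposal is $\mathrm dX_t=v_t(X_t)\,\mathrm dt+\sigma_t\,\mathrm dW_t$.

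\begin{itemize}
\item Couple the two processes synchronously: same Brownian motion and $X_0=Y_0$. Because the diffusion is state-independent, the noise cancels in $Y_t-X_t$, so this difference is absolutely continuous in time.
\item Set $\Delta_t=\|Y_t-X_t\|^2$. Its derivative is
\[
\tfrac{\mathrm d}{\mathrm dt}\Delta_t=2\langle Y_t-X_t,\,v_t(Y_t)-v_t(X_t)\rangle+2\langle Y_t-X_t,\,\nabla\psi_t(Y_t)\rangle\le 2L_t\Delta_t+2\sqrt{\Delta_t}\,\|\nabla\psi_t(Y_t)\|.
\]
\item Let $u_t=(\E\Delta_t)^{1/2}$. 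Taking expectations and applying Cauchy--Schwarz to the cross term gives $\tfrac{\mathrm d}{\mathrm dt}u_t^2\le 2L_tu_t^2+2u_t\sqrt{\mathcal E_t(v)}$, because $Y_t\sim p_t$. Hence $u_t'\le L_tu_t+\sqrt{\mathcal E_t}$ wherever $u_t>0$.
\item Grönwall with $u_0=0$ gives $u_1\le\int_0^1K_v(t,1)\sqrt{\mathcal E_t}\,\dd t$. Since $W_2(p_1,\rho_1^v)\le u_1$, this is \eqref{eq:w2-main}.
\end{itemize}

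\textbf{Poincaré step.} Note that $\E_{p_t}r_t^v=0$, and test the equation for $\psi_t$ against $\psi_t$ itself. Integration by parts (justified by finite energy and vanishing boundary flux) gives
\[
\mathcal E_t=\E_{p_t}\big[(\psi_t-\E\psi_t)\,r_t^v\big]\le\sqrt{\Var_{p_t}\psi_t}\,\sqrt{V_t}\le\sqrt{C_P(t)\,\mathcal E_t}\,\sqrt{V_t},
\]
and therefore $\mathcal E_t\le C_PV_t$.

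\textbf{Main obstacle.} The technical difficulty is the differential inequality when $u_t$ vanishes, where dividing by $u_t$ is not allowed. To handle this, I would work with $\sqrt{u_t^2+\delta}$ and let $\delta\to0$. A second point to check is that the synchronous coupling remains measurable and integrable; the one-sided Lipschitz condition and finite second moments should be enough for this.
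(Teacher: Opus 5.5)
Your proposal is correct and follows the paper's proof essentially step for step. The shared steps are: the identities obtained by integrating the unnormalized equation and using conservativity; the synchronous-noise coupling of the corrected drift $v_t+\nabla\psi_t$ (whose marginals are $p_t$ by well-posedness) with the proposal; the one-sided Lipschitz differential inequality, regularized at zero and closed with Gr\"onwall; and the Poincar\'e step, which tests against the mean-zero representative of $\psi_t$. The only difference is cosmetic: the paper uses an optimal initial coupling to get a slightly more general bound with an extra $K_v(0,1)W_2(p_0,\rho_0^v)$ term, and this reduces to your choice $X_0=Y_0$ when $\rho_0^v=p_0$.
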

The proof in Appendix~\ref{app:sec:wasserstein} adds the minimum-energy velocity that removes the residual and couples the corrected and proposal processes with common noise. The integral retains the residual's time profile before the optional Poincar\'e conversion to variance. This specializes the transport geometry of \citet{benamou2000computational,peyre2018comparison} to fitted FK proposals.

The same residual also controls the leading local cost of FK weighting.
\begin{proposition}[Local correction-weight second moment]
\label{prop:local-condition-number}
Initialize a proposal segment at $X_t\sim p_t$. Let
$\overline W_{t,\Delta}^v=\dd\mathbb P_{t,\Delta}^v/
\dd\mathbb Q_{t,\Delta}^v$ be its exact Feynman--Kac segment weight,
normalized to mean one. If
$\overline W_{t,\Delta}^v=1+\Delta r_t^v(X_t)+o_{L^2}(\Delta)$, then
\begin{equation}
\E_{\mathbb Q}[(\overline W_{t,\Delta}^v)^2]
=1+\chi^2(\mathbb P_{t,\Delta}^v\Vert\mathbb Q_{t,\Delta}^v)
=1+\Delta^2V_t(v)+o(\Delta^2).
\label{eq:local-condition}
\end{equation}
\end{proposition}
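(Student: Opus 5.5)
The plan is to expand the second moment of the normalized weight directly, using the assumed first-order $L^2$ expansion, and then identify the leading term through the marginal of $X_t$. The first equality in \eqref{eq:local-condition} is definitional. Since $\overline W^v_{t,\Delta}=\dd\mathbb P^v_{t,\Delta}/\dd\mathbb Q^v_{t,\Delta}$ has $\E_{\mathbb Q}\overline W=1$, we have
\[
\chi^2(\mathbb P\Vert\mathbb Q)=\E_{\mathbb Q}(\overline W-1)^2=\E_{\mathbb Q}\overline W^2-1.
\]
So everything reduces to computing $\E_{\mathbb Q}(\overline W-1)^2$ to order $\Delta^2$.

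Write $\overline W-1=\Delta r_t^v(X_t)+R_\Delta$, where $\|R_\Delta\|_{L^2(\mathbb Q)}=o(\Delta)$ by hypothesis. Expanding the square gives
\[
\E_{\mathbb Q}(\overline W-1)^2=\Delta^2\E_{\mathbb Q}[r_t^v(X_t)^2]+2\Delta\,\E_{\mathbb Q}[r_t^v(X_t)R_\Delta]+\E_{\mathbb Q}R_\Delta^2 .
\]
By Cauchy--Schwarz, the cross term is bounded by $2\Delta\|r_t^v\|_{L^2(p_t)}\|R_\Delta\|_{L^2}=o(\Delta^2)$. The last term is $o(\Delta^2)$ directly. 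The cross-term bound needs $r_t^v\in L^2(p_t)$, which I take as implicit in the hypothesis, since otherwise $\Delta r_t^v(X_t)$ could not be the leading $L^2$ term.

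For the main term, the segment starts at $X_t\sim p_t$ under $\mathbb Q$. Hence
\[
\E_{\mathbb Q}[r_t^v(X_t)^2]=\E_{p_t}(r_t^v)^2=V_t(v),
\]
which is exactly the definition in Theorem~\ref{thm:terminal}. Combining the three terms yields $1+\Delta^2V_t(v)+o(\Delta^2)$.

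As a consistency check, the mean-one normalization forces $\E_{p_t}r_t^v=0$ at first order, matching the centering $r_t^v=g_t^v-\E_{p_t}g_t^v$ and the identity $\partial_t\log Z_t=\E_{p_t}g_t^v$ from Theorem~\ref{thm:terminal}. This explains why the centered rate appears rather than the raw rate $g_t^v$. The main obstacle is not the algebra but making sure the hypothesis is used in the right space: the remainder must be $o(\Delta)$ in $L^2(\mathbb Q)$ and not merely in probability. Under that assumption, the Cauchy--Schwarz step is the only nontrivial point.
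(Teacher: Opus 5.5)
Your proposal is correct and follows the paper's proof essentially verbatim: you decompose $\overline W-1=\Delta r_t^v(X_t)+R_\Delta$, expand the square, show that the cross and remainder terms are $o(\Delta^2)$, and use $\E_{\mathbb Q}\overline W=1$ to identify $\E_{\mathbb Q}(\overline W-1)^2$ with both $\E_{\mathbb Q}\overline W^2-1$ and $\chi^2(\mathbb P\Vert\mathbb Q)$. Your explicit Cauchy--Schwarz bound on the cross term and the requirement $r_t^v\in L^2(p_t)$ fill in details the paper leaves implicit; the latter is a standing assumption in its analytic setup.
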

The expansion follows by squaring the normalized local weight and taking expectation (Appendix~\ref{app:proof:local-weight}). \textbf{Residual risk therefore has two concrete roles:} it controls a transport upper bound for unweighted deployment and the leading local correction-weight variance for corrected sampling. We evaluate terminal discrepancies and genealogy separately to measure this behavior.

\section{Experiments}
\label{sec:experiments}

\textbf{Evaluation Design.}
We separate fitting from evaluation and never use terminal metrics to select a controller. Frozen schedules deploy on fresh trajectories; online controllers refit the live population. Common-state risk compares methods on shared evaluation states, whereas active-path risk follows each method's trajectory. The four experiments test independent risk, terminal generation, and particle genealogy; Appendix~\ref{app:sec:experiments} gives implementation details and cohort definitions.

\subsection{Synthetic Checker Distribution}
\label{sec:common-fit}

\begin{table}[!htb]
\caption{\textbf{Independent risk on common Checker fits.} Mean integrated Risk/Anchor over ten paired fits; lower is better. Direct LOO ablates calibration; ridge and PCR select regularization.}
\label{app:tab:ray-risk}
\centering\small
\setlength{\tabcolsep}{6pt}
\begin{tabular}{@{}crrrrrr@{}}
\toprule
$N_{\rm fit}$ & Anchor & VCG & Direct LOO & LOO ridge & LOO PCR & \bfseries\ours{}\\
\midrule
8 & 1.0000 & 244.5 & 0.9239 & 0.9211 & 1.0300 & \textbf{0.8013}\\
16 & 1.0000 & 174.0 & 1.1260 & 0.8306 & 1.1740 & \textbf{0.4931}\\
32 & 1.0000 & 2.797 & 0.3821 & 0.3705 & 0.6127 & \textbf{0.3589}\\
\bottomrule
\end{tabular}
\vspace{-1em}
\end{table}

\begin{table}[!htb]
\caption{\textbf{Checker transport.} Ten paired seeds, fit 16/deploy 2,000; all endpoints are finite. VCG/RCPT use frozen schedules and ECG the raw configuration. Lineages counts terminal ancestors among 2,000 SMC particles. Bold marks the best mean; Appendix Table~\ref{app:tab:checker-direct} ablates calibration.}
\label{tab:six-baseline-2d}
\centering\small
\setlength{\tabcolsep}{4pt}
\begin{tabular}{@{}lrrr@{\qquad}rrrr@{}}
\toprule
& \multicolumn{3}{c}{Proposal-Only} & \multicolumn{4}{c}{Corrected SMC}\\
\cmidrule(lr){2-4}\cmidrule(lr){5-8}
Controller & $W_1\downarrow$ & $W_2\downarrow$ & MMD$\downarrow$ & $W_1\downarrow$ & $W_2\downarrow$ & MMD$\downarrow$ & Lineages$\uparrow$\\
\midrule
Anchor & 0.474 & 0.646 & \textbf{0.048} & 0.367 & 0.510 & 0.048 & 338.7\\
VCG & 0.814 & 1.019 & 0.226 & 2.637 & 3.033 & 5.123 & 381.6\\
ECG & 1.394 & 1.636 & 0.325 & 0.833 & 1.113 & 0.214 & 233.7\\
\rowcolor{oursrow}\ours{} & \textbf{0.442} & \textbf{0.575} & 0.073 & \textbf{0.316} & \textbf{0.464} & \textbf{0.042} & \textbf{568.5}\\
\bottomrule
\end{tabular}
\vspace{-1em}
\end{table}

Checker isolates finite-particle fitting from independent deployment. A common-fit audit uses 12 fields, $N=8,16,32$, and ten paired fits per budget; independent corrected-SMC reference systems evaluate residual risk, and 10,000 fresh trajectories per fit test deployment. A separate matched cohort fits 16 particles and deploys 2,000 per method in both Proposal-Only and SMC modes. Appendix~\ref{app:sec:ray-audit} details the reference construction and common-fit replay, and Appendix~\ref{app:sec:experiments} specifies the matched endpoint cohort.

RCPT has the lowest mean reference risk at every fitting budget (Table~\ref{app:tab:ray-risk}) and lower mean $W_1$ and $W_2$ than Anchor, VCG, and ECG in both matched deployment modes (Table~\ref{tab:six-baseline-2d}). Under SMC, it also has the lowest MMD and retains $568.5$ terminal lineages, versus $338.7$ for Anchor. Proposal-Only MMD favors Anchor, illustrating that endpoint rankings depend on the metric. Figure~\ref{app:fig:three-regime-qualitative} visualizes generated distributions; Appendix Table~\ref{app:tab:ray-endpoints} reports common-fit endpoints and completion counts.

\begin{figure}[!htb]
\vspace{-0.5em}
\centering
\includegraphics[width=0.7\textwidth]{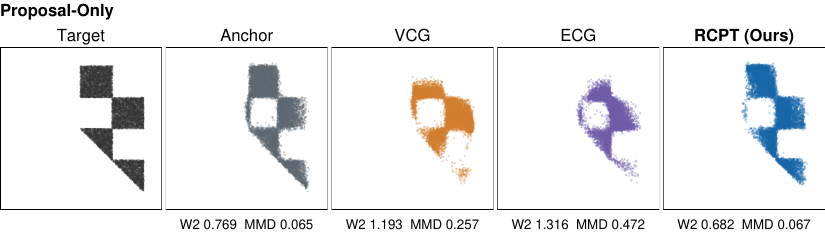}
\caption{\textbf{Separate Checker Proposal-Only replay with ECG.} Frozen fit-16 schedules,
10,000 fresh trajectories, and no resampling. Labels use a fixed
2,000-sample endpoint audit. This illustrative replay is separate from Table~\ref{tab:six-baseline-2d}. RCPT gives the lowest $W_2$; the corresponding SMC results are in Appendix Table~\ref{app:tab:checker-new}.}
\label{app:fig:three-regime-qualitative}
\vspace{-1em}
\end{figure}

\subsection{Scaffold Decoration on CrossDocked2020}

We evaluate all 76 CrossDocked2020 scaffold--pocket tasks of \citet{lee2026collapse}. Four expert paths supply fragment, ligand, conformation, and pocket information. Sixteen particles fit each 12-field schedule, which is frozen for 16 fresh paired Proposal-Only outputs. Chemistry and docking are evaluated afterward; unique useful yield counts distinct valid, scaffold-retaining molecules that beat the reference docking score per generation slot. Appendix~\ref{app:sec:scaffold-revised} specifies the models, fields, and evaluation protocol.

RCPT restores useful molecular generation after the expanded VCG fit (Table~\ref{tab:scaffold-simple-main}). Validity rises from $0.229$ to $0.434$ and unique useful yield from $0.118$ to $0.251$, with paired 95\% pocket-bootstrap improvement intervals $[0.151,0.260]$ and $[0.086,0.181]$. A separate 64-output deployment of the same schedules reproduces the VCG repair across 4,864 candidates per arm (Appendix Table~\ref{app:tab:scaffold-deploy64-confirmation}). Figure~\ref{fig:molecules-main} shows matched molecular examples.

\begin{table}[!htb]
\caption{\textbf{Scaffold decoration.} Means over 76 pockets, fit 16/deploy 16. Yields use fixed generation denominators; risk is active-path variance normalized by Anchor. Unique useful counts distinct valid, scaffold-retaining, reference-beating outputs. $^\dagger$ECG is a separate 76-pocket run.}
\label{tab:scaffold-simple-main}
\centering\small
\setlength{\tabcolsep}{4pt}
\begin{tabular}{@{}lrrrrrr@{}}
\toprule
Controller & Validity$\uparrow$ & Scaffold$\uparrow$ & Risk/Anchor$\downarrow$ &
\shortstack{Dock\\success$\uparrow$} & Affinity$\uparrow$ & \shortstack{Unique\\useful$\uparrow$}\\
\midrule
Anchor & 0.425 & 0.416 & 1.000 & 0.890 & 0.282 & 0.236\\
VCG & 0.229 & 0.225 & 26.275 & 0.569 & 0.157 & 0.118\\
ECG$^\dagger$ & 0.221 & 0.215 & 3.593 & 0.465 & 0.140 & 0.126\\
\rowcolor{oursrow}\ours{} & \textbf{0.434} & \textbf{0.419} & \textbf{0.877} & \textbf{0.903} & \textbf{0.297} & \textbf{0.251}\\
\bottomrule
\end{tabular}
\vspace{-1em}
\end{table}

\begin{figure}[!htb]
\centering
\includegraphics[width=0.7\textwidth]{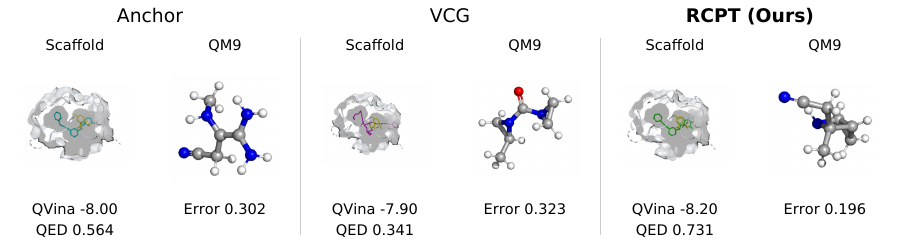}
\caption{\textbf{Molecular generation.} Matched scaffold (left: lower QVina, higher QED) and QM9 (right: lower property error) examples for each proposal. Tables~\ref{tab:scaffold-simple-main} and~\ref{tab:reward-qm9} report performance.}
\label{fig:molecules-main}
\end{figure}

\subsection{Multi-Property Molecular Reward Steering on QM9}

We steer pretrained EEGSDE \citep{bao2023equivariant} toward polarizability and HOMO--LUMO-gap targets on QM9 \citep{ramakrishnan2014quantum} using 12 gated reward-gradient fields. Each of eight split seeds fits on 16 particles and deploys eight fresh SMC batches of 16; eight online seeds instead refit 144 live particles. Independent property evaluators and canonical molecular identities measure attainment and diversity. Appendix~\ref{app:sec:reward} gives the reward, controller, and resampling details.

RCPT improves property attainment and genealogy in split deployment (Table~\ref{tab:reward-qm9}): unique successful molecules increase from $13.88$ for VCG to $23.00$ per 128 outputs, while the surviving-lineage fraction rises from $0.1611$ to $0.3525$. Online, both VCG and RCPT preserve all 144 lineages; RCPT retains a mean amplitude of $0.945$ and achieves slightly better mean evaluator error and unique success. These protocols demonstrate both repair of an unstable fit and retention of a well-supported update; Appendix Table~\ref{app:tab:reward-split-sd} reports seed variability.

\begin{table}[!htb]
\caption{\textbf{Multi-property molecular steering.} Eight-seed means. Unique success counts distinct valid target-meeting identities per 128 split or 144 online outputs. Lineages are normalized by batch size: $N=16$ split, $N=144$ online.}
\vspace{-1em}
\label{tab:reward-qm9}
\centering\small
\setlength{\tabcolsep}{5pt}
\begin{tabular}{@{}llrrr@{}}
\toprule
Protocol & Controller & Eval. MAE$\downarrow$ & Unique success$\uparrow$ & Lineages/$N\uparrow$\\
\midrule
Split $16+8\times16$ & Anchor & 0.9171 & 20.75/128 & 0.3311\\
& VCG & 0.9281 & 13.88/128 & 0.1611\\
\rowcolor{oursrow}& \ours{} & \textbf{0.8997} & \textbf{23.00/128} & \textbf{0.3525}\\
\midrule
Online $N=144$ & Anchor & 1.0570 & 0.75/144 & 0.0069\\
& VCG & 0.8027 & 45.25/144 & \textbf{1.0000}\\
\rowcolor{oursrow}& \ours{} & \textbf{0.7994} & \textbf{45.50/144} & \textbf{1.0000}\\
\bottomrule
\end{tabular}
\vspace{-1em}
\end{table}

\subsection{Improving Guided Image Generation on CIFAR-10}

Using pretrained DLSM \citep{chao2022dlsm}, we target $\widetilde p_t=q_t p_\phi(c\mid x,t)^{2.5}$ with matched Anchor guidance $\gamma=2.5$. VCG-9 and RCPT fit nine auxiliary class gradients on 16 states, then generate 320 images (32/class) with a frozen schedule and 500 probability-flow Euler steps. Independent ResNet accuracy and Inception, DINO, and CLIP metrics evaluate one seed (Appendix~\ref{app:sec:image-dlsm}).

RCPT improves all six externally evaluated metrics over Anchor and VCG-9 (Table~\ref{tab:image-main}). Relative to VCG-9, independent accuracy rises from $0.6438$ to $0.6750$, KID falls from $0.2685$ to $0.2407$, and Clean-FID falls from $89.759$ to $88.232$. A separate unit-power mechanism audit finds lower residual risk than VCG in 46/50, 47/50, and 43/50 class--seed cells at fitting populations 8, 16, and 32. Appendix Figure~\ref{app:fig:cifar-qualitative} gives selected qualitative examples.

\begin{table}[!htb]
\caption{\textbf{Class-conditional Image Generation on CIFAR-10.} Single-seed comparison using all 320 outputs per method. KID uses independent ResNet features; FID and DINO FD use Inception and DINO features, respectively. DLSM accuracy measures guider self-consistency, not independent class fidelity.}
\vspace{-1em}
\label{tab:image-main}
\centering\footnotesize
\setlength{\tabcolsep}{2.5pt}
\begin{tabular}{@{}lrrrrrrr@{}}
\toprule
Controller & Ind. acc.$\uparrow$ & DLSM acc.$\uparrow$ & KID$\downarrow$ & FID$\downarrow$ & DINO FD$\downarrow$ & DINO MMD$\downarrow$ & CLIP margin$\uparrow$\\
\midrule
Anchor & 0.6625 & 0.9625 & 0.2688 & 89.043 & 0.24740 & 0.03539 & 0.00942\\
VCG & 0.6438 & \textbf{0.9813} & 0.2685 & 89.759 & 0.24826 & 0.03381 & 0.00809\\
\rowcolor{oursrow}\ours{} & \textbf{0.6750} & 0.9625 & \textbf{0.2407} & \textbf{88.232} & \textbf{0.24534} & \textbf{0.03280} & \textbf{0.01001}\\
\bottomrule
\end{tabular}
\vspace{-1em}
\end{table}

\section{Discussion}
\label{sec:discussion}

\textbf{Do existing regularization methods already solve the problem?}
The ablations separate changes to the fitted direction through ridge or PCR from changes to the deployed amplitude through direct or structured calibration. On common Checker fits at $N=16$, RCPT's mean Risk/Anchor is $0.4931$, versus $0.8306$ for LOO-selected ridge and $1.1740$ for PCR (Table~\ref{app:tab:ray-risk}); both paired 95\% bootstrap intervals favor RCPT. RCPT captures $80.7\%$ of reference-ray improvement (Appendix Table~\ref{tab:ray-decision}); Appendix~\ref{app:sec:additional-ablation-tables} gives the paired intervals.

\textbf{Does reducing controller capacity solve the problem?}
Reducing capacity can stabilize residual fitting but sacrifice downstream chemistry. Across 76 scaffold pockets, RCPT improves validity over VCG-4 at both fitting budgets. At fit 64, affinity-qualified yield rises from $0.243$ to $0.312$ and unique useful yield from $0.206$ to $0.271$, with paired 95\% intervals excluding zero, despite VCG-4's lower residual risk (Appendix Table~\ref{app:tab:scaffold-budget}; Appendix~\ref{app:sec:scaffold-revised}). In CIFAR, RCPT has fewer harmful cells than VCG-4 (5/50 vs.\ 13/50), while VCG-4 has lower mean risk (Appendix Table~\ref{app:tab:image-matched-risk}). Although oracle directionwise shrinkage can improve on a common amplitude, estimating additional amplitudes from finite particles can overfit deletion noise and increase deployment risk, motivating RCPT's scalar calibration (Appendix~\ref{app:sec:directional}).

\textbf{The structured decision adds value beyond direct calibration.}
Against Direct LOO, RCPT improves paired Checker risk at $N=16$ and mean online QM9 unique success; its CIFAR advantages are independent accuracy, Clean-FID, and CLIP margin. On Scaffold, Direct LOO has lower common-state risk, while RCPT has stronger downstream means with paired intervals including zero (Appendix Tables~\ref{app:tab:scaffold-direct}, \ref{app:tab:qm9-calibration}, and~\ref{app:tab:image-scale15}).

\textbf{Do simple baselines already solve the problem?}
Fixed shrinkage, ridge, averaging, and norm caps can repair particular deployments, but their settings transfer unevenly. Fixed $\alpha=0.50$ exceeds RCPT's molecular yields on the ten-pocket scaffold subset, yet worsens mean QM9 evaluator error relative to VCG in the four-seed audit. Fixed ridge is similarly setting-sensitive. RCPT supplies one data-dependent rule across these tasks; the evidence supports adaptive protection, not uniform superiority over every tuned heuristic (Appendix Tables~\ref{app:tab:scaffold-simple-baselines} and~\ref{app:tab:reward-simple-baselines}).

\textbf{Supported updates remain inexpensive.}
Figure~\ref{fig:alpha-tasks} shows conservative scaffold and image amplitudes, time-varying retention on Checker, and strong retention on online QM9. In the scaffold capacity audit, increasing the fitting budget from 16 to 64 raises mean RCPT amplitude from $0.194$ to $0.906$. Calibration adds no expert calls; measured QM9 overhead is $3.13$ seconds, or $0.081\%$, with equal probed peak memory. Appendix Table~\ref{tab:cost-main} and Appendix~\ref{app:sec:cost} separate application cost from selector timings.

\begin{figure}[!htb]
\vspace{-1em}
\centering
\includegraphics[width=\textwidth]{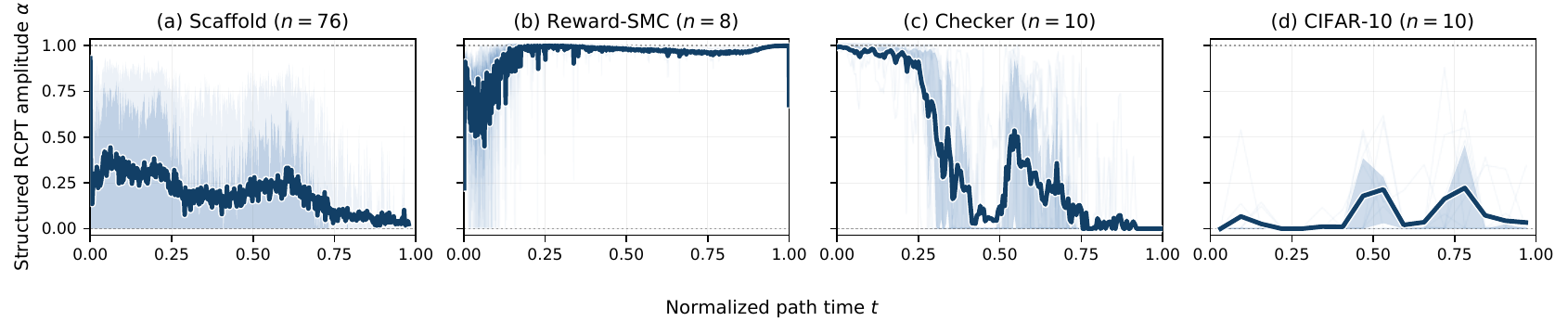}
\vspace{-2.9em}
\caption{\textbf{RCPT adapts its trust across tasks.} Mean amplitudes (thick), interquartile ranges (bands), and individual units (thin) against normalized native time. Zero recovers Anchor; one retains VCG. See Appendix~\ref{app:sec:alpha} for more details.}
\label{fig:alpha-tasks}
\end{figure}

\textbf{Why not refit VCG online?} Online refitting tracks changes in the particle distribution, but each update still estimates a controller from a finite ensemble. With $m$ particles, the centered empirical feature covariance has rank at most $m-1$. Thus, repeated fitting does not by itself increase the independent information available at a given time. RCPT applies the same calibration principle to frozen and online fits. In the online QM9 experiment with 144 particles, it retains most of the VCG update, achieves slightly better terminal performance, and preserves all lineages (Table 4). This illustrates how calibration accommodates informative online fits while controlling the amplitude supported by the available particles.

\textbf{Limitations and future directions.}
\label{sec:limitations}
The structured guarantee is conditional on calibration error, and residual upper bounds do not impose a universal ordering of endpoints or genealogy. The common-fit $N=32$ SMC audit favors Anchor over RCPT in $W_2$ by $0.0817$ (paired 95\% interval $[0.0051,0.1629]$), while the larger scaffold deployment is essentially tied with Anchor. Extending frozen-schedule guarantees to online interacting particles requires dependence and target-reference controls; lineage-aware calibration and stable grouped shrinkage are natural next steps. Multiple fitting seeds per scaffold task and broader matched regularization cohorts would further test robustness. Appendix~\ref{app:sec:reference-transfer} states the relevant transfer conditions.

\section{Conclusion}

Finite-particle proposal fitting creates a deployment decision that population control alone does not resolve. RCPT selects how much of an estimated FK correction to apply using the existing regression information. The headroom--penalty identity explains the statistical failure, and realized-ray analysis makes the quality of the selected amplitude directly testable. Common-fit comparisons show a substantial advantage over adaptive regularization at a scarce-particle budget, while controlled and molecular deployments establish practical repair of harmful VCG fits. The resulting method provides an inexpensive way to retain supported adaptation when fitting information is limited.

\subsection*{AI Use Statement}
Large language models (LLMs) were used to revise author-written text for clarity, concision, and organization, and to assist with literature retrieval, theoretical formulation, mathematical derivations and proof writing, feedback on experimental design, code development, and interpretation of results. LLMs also assisted with drafting and revising manuscript passages and preparing figures, tables, references, and LaTeX. The authors determine the final research claims and take responsibility for the correctness, originality, and verification of all AI-assisted text, proofs, code, and interpretations included in this work.

\subsection*{Ethics statement}
Generated molecules are computational candidates, not validated compounds.
Docking and learned-property scores do not establish biological activity,
synthesizability, or safety. The method inherits the limitations of the
pretrained generators and evaluators used for steering.

\subsection*{Reproducibility Statement}
Upon acceptance, we will provide supplementary source code for the structured RCPT estimator, the Direct LOO calibration ablation, and integrations for Checker, scaffold decoration, QM9 reward steering, and CIFAR-10. The package includes application drivers, configuration and dependency files, and CPU unit tests for the core estimator; its README describes setup commands, entrypoints, and external asset requirements. Appendices~\ref{app:sec:theory}--\ref{app:sec:directional} state the theoretical assumptions and proofs. Appendix~\ref{app:sec:implementation} specifies weighted fitting, deletion refits, and numerical guards, while Appendix~\ref{app:sec:experiments} documents datasets, fitting and deployment protocols, evaluation metrics, and statistical units. Pretrained weights, datasets, generated samples, and experiment outputs are not bundled. These materials support inspection and testing of the calibration rule and document the setup required for application-scale reproduction.

\section*{Acknowledgements}
This work was partly supported by the KHIDI grant funded by the Korean government (MOHW) [No.RS-2025-02307233, No.RS-2026-25613012], the NRF or IITP grants funded by the Korean government (MSIT) [No.05-26-04-0094, No.RS-2026-25472075, No.RS-2025-02305581, No.RS-2025-25442338, and No.RS-2021-II211343], the ITIP grant funded by the Korean government (MOTIR) [No.RS-2026-25549946], the Research grant from SNU, the Strategic Hub grant for International Research Collaboration of SNU, and the AI Seoul Tech Research Support Program of the Seoul Future Foundation. Kyungsu Kim is affiliated with the School of Transdisciplinary Innovations, Department of Biomedical Science, Interdisciplinary Program in AI (IPAI) \& Graduate School of AI, Medical Research Center, and AI Institute at SNU.

\bibliography{main}
\bibliographystyle{plainnat}

\clearpage
\appendix

\setcounter{secnumdepth}{2}
\numberwithin{equation}{section}
\numberwithin{theorem}{section}
\numberwithin{proposition}{section}
\numberwithin{corollary}{section}
\numberwithin{lemma}{section}
\numberwithin{table}{section}
\numberwithin{figure}{section}
\numberwithin{algorithm}{section}
\setcounter{tocdepth}{1}

{\huge \centering \bfseries Appendix \par}
\addcontentsline{toc}{section}{Appendix}

\startcontents[appendix]
\printcontents[appendix]{}{1}{}

\clearpage

\section{Supplementary Overview}

The supplement first presents additional ablations and cost measurements, then the proofs needed by the main paper, implementation details, and additional evaluations. RCPT denotes the structured $H,P$ rule in all reported tables and figures. Direct LOO is defined as a calibration ablation. Time runs from noise at $t=0$ to the terminal law at $t=1$, unless an experiment states its native reverse-time coordinate.

\paragraph{Scope of the guarantees.}
The transport bounds concern a fixed proposal schedule and the specified target path. The $H,P,B$ identities average over fitting samples, while the structured calibration theorem controls a realized decision through its quadratic approximation errors. The directional oracle and counterexample explain the choice of one amplitude. None of these results orders all terminal metrics, certifies the online interacting-particle loop, or identifies weight ESS with independent regression information.

\paragraph{Broader impact.}
RCPT may reduce the particle and memory cost of inference-time steering.
It inherits the limitations and potential misuse of the underlying generators:
generated molecules remain computational hypotheses, and image steering can
also produce misleading content. Proposal calibration supplies no scientific
validity or safety guarantee.

\section{Additional Ablation and Cost Tables}
\label{app:sec:additional-ablation-tables}

\subsection{Common-fit calibration and paired comparisons}

The paired $N=16$ comparison supports the structured decision. RCPT-minus-comparator Risk/Anchor differences are $-0.6331$ for Direct LOO (95\% CI $[-1.363,-0.0669]$), $-0.3375$ for LOO-selected ridge (95\% CI $[-0.7484,-0.0510]$), and $-0.6804$ for LOO-selected PCR (95\% CI $[-1.182,-0.2518]$). All three paired fitting-seed bootstrap intervals exclude zero, indicating lower mean reference risk at the pairwise 95\% level, conditional on the reference construction.

The reference-ray oracle quantifies the useful correction available along the fitted direction. For mean normalized risks $\bar R$, define the descriptive fraction
\[
\frac{1-\bar R(M)}{1-\bar R(\text{ray oracle})}.
\]
Anchor gives zero and the reference-ray oracle gives one. The diagnostic oracle risks are $0.5758$, $0.3717$, and $0.2570$ at $N=8,16,32$, respectively. Applying this expression to the reported aggregate risks gives RCPT approximately $46.8\%$, $80.7\%$, and $86.3\%$ at $N=8,16,32$. This is a ratio of aggregate improvements, rather than a mean of per-fit fractions. Table~\ref{tab:ray-decision} pairs it with the reported curvature-weighted ray regret. The oracle uses evaluation states and serves only as a diagnostic.

\begin{table}[!htb]
\caption{\textbf{Same-ray calibration ablation.} Captured improvement uses the diagnostic oracle risks specified above and the mean normalized risks in Table~\ref{app:tab:ray-risk}. Regret is the mean integrated excess risk over the reference-ray oracle. All deployable rows use the same full-data VCG direction.}
\label{tab:ray-decision}
\centering\small
\begin{tabular}{@{}lrr rr rr@{}}
\toprule
& \multicolumn{2}{c}{$N=8$} & \multicolumn{2}{c}{$N=16$} & \multicolumn{2}{c}{$N=32$}\\
\cmidrule(lr){2-3}\cmidrule(lr){4-5}\cmidrule(lr){6-7}
Decision & Captured & Regret & Captured & Regret & Captured & Regret\\
\midrule
Anchor & $0\%$ & 15.720 & $0\%$ & 23.290 & $0\%$ & 27.540\\
Direct LOO & $17.9\%$ & 12.900 & $-20.1\%$ & 27.960 & $83.2\%$ & 4.635\\
\rowcolor{oursrow}\bfseries\ours{} & \textbf{46.8\%} & \textbf{8.359} & \textbf{80.7\%} & \textbf{4.500} & \textbf{86.3\%} & \textbf{3.776} \\
\bottomrule
\end{tabular}
\end{table}

\subsection{Scaffold capacity and calibration}

The capacity study and matched calibration cohort answer different questions and retain separate means. Appendix~\ref{app:sec:scaffold-revised} gives the fitting and deployment protocols.

\begin{table}[!htb]
\caption{\textbf{Scaffold controller-capacity ablation.} The same 76 pockets and 16 deployment slots per arm. VCG-4 retains the ungated fields; VCG-12 and RCPT use all 12 fields. Mean RCPT amplitude is 0.194 at fit 16 and 0.906 at fit 64. All yields use the fixed generation denominator.}
\label{app:tab:scaffold-budget}
\centering\small
\setlength{\tabcolsep}{3pt}
\begin{tabular}{@{}lrrrrrrr@{}}
\toprule
Fit & Controller & Validity$\uparrow$ & Scaffold$\uparrow$ & Risk/Anchor$\downarrow$ & \shortstack{Dock\\success$\uparrow$} & Affinity$\uparrow$ & \shortstack{Unique\\useful$\uparrow$}\\
\midrule
16 & VCG-4 & 0.383 & 0.374 & 0.650 & 0.867 & 0.257 & 0.225\\
16 & VCG-12 & 0.229 & 0.225 & 26.275 & 0.569 & 0.157 & 0.118\\
\rowcolor{oursrow}16 & \ours{} & 0.434 & 0.419 & 0.877 & 0.903 & 0.297 & 0.251\\
\midrule
64 & VCG-4 & 0.354 & 0.335 & 0.297 & 0.936 & 0.243 & 0.206\\
64 & VCG-12 & 0.335 & 0.322 & 1.069 & 0.915 & 0.236 & 0.200\\
\rowcolor{oursrow}64 & \ours{} & 0.441 & 0.422 & 0.548 & 0.911 & 0.312 & 0.271\\
\bottomrule
\end{tabular}
\end{table}

\begin{table}[!htb]
\caption{\textbf{Matched Scaffold calibration ablation.} A separate fit-16/deploy-16 cohort on all 76 pockets. Anchor, Direct LOO, and RCPT isolate the addition of calibration and its structured form; the full VCG fit is shown for reference. These means are kept separate from the earlier cohort. Risk is the active-path diagnostic; paired uncertainty is reported in Appendix~\ref{app:sec:scaffold-revised}.}
\label{app:tab:scaffold-direct}
\centering\small
\setlength{\tabcolsep}{3.4pt}
\begin{tabular}{@{}lrrrrrr@{}}
\toprule
Controller & Validity$\uparrow$ & Scaffold$\uparrow$ & Risk/Anchor$\downarrow$ & \shortstack{Dock\\success$\uparrow$} & Affinity$\uparrow$ & \shortstack{Unique\\useful$\uparrow$}\\
\midrule
Anchor & 0.438 & 0.415 & 1.000 & 0.898 & 0.294 & 0.251\\
VCG-12 & 0.220 & 0.213 & 21.184 & 0.562 & 0.143 & 0.111\\

Direct LOO & 0.418 & 0.400 & 0.756 & 0.914 & 0.289 & 0.247\\
\rowcolor{oursrow}\ours{} & 0.438 & 0.423 & 0.876 & 0.906 & 0.303 & 0.262\\
\bottomrule
\end{tabular}
\end{table}

\subsection{QM9 and image calibration}

These ablations use the online QM9 budget and the CIFAR terminal-generation cohort described in Appendices~\ref{app:sec:reward} and~\ref{app:sec:image-dlsm}.

\begin{table}[!htb]
\caption{\textbf{QM9 calibration ablation.} Eight online seeds, $N=144$.
Anchor has no fitted correction; Direct LOO uses direct deleted-loss
calibration; RCPT uses the structured headroom--penalty decision.
Unique success counts distinct valid target-meeting identities per 144 outputs.}
\label{app:tab:qm9-calibration}
\centering\small
\setlength{\tabcolsep}{6pt}
\begin{tabular}{@{}lrrr@{}}
\toprule
Calibration & MAE$\downarrow$ & Unique success /144$\uparrow$ & Lineages/$N\uparrow$\\
\midrule
Anchor & 1.0570 & 0.75 & 0.0069\\
Direct LOO & 0.8019 & 43.13 & 0.9479\\
\rowcolor{oursrow}\ours{} & \textbf{0.7994} & \textbf{45.50} & \textbf{1.0000}\\
\bottomrule
\end{tabular}
\end{table}

\begin{table}[!htb]
\caption{\textbf{CIFAR-10 calibration ablation.} The same 320-output seed and nine auxiliary fields as Table~\ref{tab:image-main}. Direct LOO and RCPT calibrate the same fitted direction. Bold marks the best value in each column.}
\label{app:tab:image-scale15}
\centering\footnotesize
\setlength{\tabcolsep}{2.5pt}
\begin{tabular}{@{}lrrrrrrr@{}}
\toprule
Controller & Ind. acc.$\uparrow$ & DLSM acc.$\uparrow$ & KID$\downarrow$ & FID$\downarrow$ & DINO FD$\downarrow$ & DINO MMD$\downarrow$ & CLIP margin$\uparrow$\\
\midrule
Anchor & 0.6625 & 0.9625 & 0.2688 & 89.043 & 0.24740 & 0.03539 & 0.00942\\
Direct LOO & 0.6563 & \textbf{0.9656} & \textbf{0.2308} & 88.722 & \textbf{0.24523} & \textbf{0.03241} & 0.00979\\
\rowcolor{oursrow}\ours{} & \textbf{0.6750} & 0.9625 & 0.2407 & \textbf{88.232} & 0.24534 & 0.03280 & \textbf{0.01001}\\
\bottomrule
\end{tabular}
\end{table}

\begin{table}[!htb]
\caption{\textbf{CIFAR-10 target-matched risk and capacity audit.} Fifty class--seed cells, five seeds, fit 16, and $\beta=\gamma=2.5$. Each independent-state risk is divided by its cell's matched Anchor risk before averaging. These are common-state surrogate risks. Harm means a ratio above one; bold marks the best fitted method.}
\label{app:tab:image-matched-risk}
\centering\small
\setlength{\tabcolsep}{8pt}
\begin{tabular}{@{}lrrr@{}}
\toprule
Controller & Mean Risk/Anchor$\downarrow$ & Harm$\downarrow$ & Worst ratio$\downarrow$\\
\midrule
Anchor & 1.0000 & 0/50 & 1.0000\\
VCG-9 & 0.9968 & 22/50 & 1.0246\\
VCG-4 & \textbf{0.9869} & 13/50 & 1.0401\\
Direct LOO & 0.9970 & 13/50 & 1.0204\\
\rowcolor{oursrow}\ours{} & 0.9977 & \textbf{5/50} & \textbf{1.0126}\\
\bottomrule
\end{tabular}
\end{table}

\subsection{Application cost}

In the common-fit Checker implementation, mean selector times per 200-step schedule are $0.132$ seconds for RCPT, $1.31$ for Direct LOO, $9.34$ for selected ridge, and $13.03$ for selected PCR. These implementation-specific selector timings are separate from the hardware-identified generation measurements below. Appendix~\ref{app:sec:cost} gives the instrumentation.

\begin{table}[!htb]
\caption{\textbf{Measured application cost.} Scaffold means use all 76 pockets on an RTX A6000 (48 GB). QM9 uses four recorded A100-SXM4 (80 GB) seeds; peak memory is from a matched 10-step probe. Times exclude offline evaluation. Full instrumentation is in Appendix~\ref{app:sec:cost}.}
\label{tab:cost-main}
\centering\small
\begin{tabular}{@{}llrr@{}}
\toprule
Setting & Controller & Generation time (s) & Peak memory (GB)\\
\midrule
Scaffold, 16 particles & VCG & 201.715 & 5.070\\
\rowcolor{oursrow} & \ours{} & 197.457 & 4.742\\
\midrule
QM9, 144 particles & VCG & 3879.518 & 40.204\\
\rowcolor{oursrow} & \ours{} & 3882.645 & 40.204\\
\bottomrule
\end{tabular}
\end{table}

\section{Transport Residual and Correction Weights}
\label{app:sec:theory}

\subsection{Analytic setup}
\label{app:sec:analytic-setup}

Let $\widetilde p_t>0$ have finite normalizer $Z_t$ and sufficient space-time
regularity to differentiate under the integral. For each fixed admissible
drift schedule, assume the conservative operator $\mathcal L_{t,v}^*$
generates a Markov evolution $U_{s,t}^{v,*}$, the forward equation has a unique
mild solution, and $\int_0^1\|p_tr_t^v\|_{L^1}\,\dd t<\infty$.
Boundary fluxes vanish. Assume $r_t^v\in L^2(p_t)$ for the stated risk bounds.
Random schedules fitted independently of deployment can be conditioned on
throughout. These assumptions do not identify a particle-dependent online
drift with a fixed single-particle Markov process.

Nonsmooth or vanishing terminal densities require a limiting argument.
One may apply the dynamic result up to $1-\tau$ on a regular positive path,
then let $\tau\downarrow0$ if the endpoint laws converge in $W_2$ and the
energy bound remains uniformly integrable. The triangle inequality and
this convergence pass the upper bound to the limit.

\subsection{Proof of Theorem~\ref{thm:terminal}}
\label{app:proof:terminal}

\begin{proof}
The definition of $g_t^v$ gives
$\partial_t\widetilde p_t=\mathcal L_{t,v}^*\widetilde p_t+g_t^v\widetilde p_t$.
Integrating and using conservativity yields
$\dot Z_t/Z_t=\E_{p_t}g_t^v$. Substituting $\widetilde p_t=Z_tp_t$ proves
\begin{equation}
\partial_tp_t=\mathcal L_{t,v}^*p_t+p_tr_t^v,
\qquad r_t^v=g_t^v-\E_{p_t}g_t^v.
\label{app:eq:normalized}
\end{equation}
Since $p_0=\rho_0^v$, variation of constants gives
\begin{equation}
p_1-\rho_1^v=\int_0^1U_{t,1}^{v,*}(p_tr_t^v)\,\dd t.
\label{app:eq:duhamel}
\end{equation}
Markov kernels contract the total-variation norm of signed measures. Therefore,
with $d_{\TV}(p,q)=\frac12\|p-q\|_1$,
\[
d_{\TV}(p_1,\rho_1^v)
\leq\frac12\int_0^1\E_{p_t}|r_t^v|\,\dd t
\leq\frac12\int_0^1\sqrt{V_t(v)}\,\dd t.
\]
This proves the normalized source identity. The Wasserstein part follows in Appendix~\ref{app:sec:wasserstein}. For completeness, the optional final time relaxation is
\begin{equation}
d_{\TV}(p_1,\rho_1^v)\le\tfrac12\int_0^1\sqrt{V_t(v)}\dd t
\le\tfrac12\sqrt{\int_0^1V_t(v)\dd t}.
\label{eq:tv-bound}
\end{equation}
\end{proof}

For a statistic $m\in[a,b]$, the same argument implies
$|\E_{p_1}m-\E_{\rho_1^v}m|\leq(b-a)d_{\TV}(p_1,\rho_1^v)$.
Zero residual along the entire path implies exact unweighted marginals by
uniqueness. A reduced residual bound alone does not imply exactness or a
strict reduction in terminal discrepancy: the propagated signed defects in
Eq.~\eqref{app:eq:duhamel} can cancel.

\subsection{Proof of the Wasserstein upper bound}
\label{app:sec:wasserstein}

Transport energy retains spatial information that residual variance discards.
For a zero-mass signed density $h$, define
\begin{equation}
\|h\|_{-1,p}=\sup_{\int\|\nabla f\|^2p\le1}\left|\int fh\right|,
\qquad \mathcal A_p=-\nabla\cdot(p\nabla).
\label{app:eq:negative-norm}
\end{equation}
We use weak solutions modulo constants, with zero boundary flux or sufficient
decay at infinity. If $\mathcal A_p\psi=h$ has finite energy, testing its
weak equation and using Cauchy--Schwarz gives
$\|h\|_{-1,p}^2=\int\|\nabla\psi\|^2p=\langle h,\mathcal A_p^{-1}h\rangle$.
In particular, $\mathcal E_t(v)=\|p_tr_t^v\|_{-1,p_t}^2$.

\begin{lemma}[Minimum correction energy]
Among velocities satisfying $-\nabla\cdot(p_tw)=p_tr_t^v$, the gradient
$w=\nabla\psi_t$ minimizes $\E_{p_t}\|w\|^2$. If $p_t$ has Poincar\'e
constant $C_P(t)$, then $\mathcal E_t(v)\le C_P(t)V_t(v)$.
\end{lemma}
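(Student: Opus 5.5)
The plan has two parts. First, show that the gradient field is the minimum-energy solution of the divergence constraint. Second, bound its energy by the residual variance using the Poincaré inequality.

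\textbf{Step 1: minimality.} Let $w$ be any admissible velocity with $-\nabla\cdot(p_tw)=p_tr_t^v$ and $\E_{p_t}\|w\|^2<\infty$. Write $w=\nabla\psi_t+u$. Then $u$ satisfies the homogeneous constraint $\nabla\cdot(p_tu)=0$ weakly, with the zero-flux and decay conventions fixed above.

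Test the weak equation for $u$ against $\psi_t$. Integrating by parts gives
\[
\E_{p_t}\langle\nabla\psi_t,u\rangle=-\int\psi_t\,\nabla\cdot(p_tu)=0 .
\]
Expanding the square therefore yields
\[
\E_{p_t}\|w\|^2=\E_{p_t}\|\nabla\psi_t\|^2+\E_{p_t}\|u\|^2\ge\mathcal E_t(v),
\]
with equality iff $u=0$ in $L^2(p_t)$.

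The only delicate point here is justifying the test with $\psi_t$ itself rather than with a smooth compactly supported function. Since $\nabla\psi_t\in L^2(p_t)$, I would approximate $\psi_t$ by smooth cutoffs in the weighted $H^1(p_t)$ seminorm. The boundary/decay assumption then kills the flux terms in the limit.

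\textbf{Step 2: Poincaré bound.} Use the dual characterization already established,
\[
\mathcal E_t(v)=\|p_tr_t^v\|_{-1,p_t}^2=\sup_{\E_{p_t}\|\nabla f\|^2\le1}\Bigl|\E_{p_t}[f\,r_t^v]\Bigr|^2 .
\]
Because $\E_{p_t}r_t^v=0$, we may replace $f$ by $f-\E_{p_t}f$. Cauchy--Schwarz then gives
\[
\bigl|\E_{p_t}[(f-\E_{p_t}f)\,r_t^v]\bigr|^2\le\Var_{p_t}(f)\,\E_{p_t}(r_t^v)^2 .
\]
The Poincaré inequality $\Var_{p_t}(f)\le C_P(t)\E_{p_t}\|\nabla f\|^2\le C_P(t)$ then gives $\mathcal E_t(v)\le C_P(t)V_t(v)$.

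Alternatively, one can test the equation for $\psi_t$ with $\psi_t-\E_{p_t}\psi_t$:
\[
\mathcal E_t=\E_{p_t}\bigl[(\psi_t-\E\psi_t)\,r_t^v\bigr]\le\sqrt{C_P\,\mathcal E_t}\,\sqrt{V_t}.
\]
Dividing by $\sqrt{\mathcal E_t}$ gives the same bound, which is trivial if $\mathcal E_t=0$.

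\textbf{Main obstacle.} I expect the main obstacle to be the functional-analytic bookkeeping rather than the inequalities. One must ensure that a finite-energy weak solution $\psi_t$ exists modulo constants, which follows by Lax--Milgram on $H^1(p_t)/\mathbb R$. Coercivity there is supplied by exactly the Poincaré inequality, and boundedness of $f\mapsto\E_{p_t}[fr_t^v]$ also uses it. One must also justify that the orthogonality pairing is legitimate under the stated decay. I would handle both through this Lax--Milgram/Riesz representation argument, which also identifies $\nabla\psi_t$ as the $L^2(p_t)$-projection of any admissible $w$ onto gradients.
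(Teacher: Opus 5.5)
Your proof is correct and follows the paper's own argument. Minimality comes from the same decomposition $w=\nabla\psi_t+u$ with $\nabla\cdot(p_tu)=0$, the integration-by-parts orthogonality, and the Pythagorean identity. Your ``alternative'' Step 2 is exactly the paper's proof of the energy bound: test with the mean-zero representative of $\psi_t$, apply Cauchy--Schwarz and Poincar\'e, then divide by $\sqrt{\mathcal E_t(v)}$.

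Your primary dual-norm route is only a repackaging of that argument. It uses the identity $\mathcal E_t(v)=\|p_tr_t^v\|_{-1,p_t}^2$ stated just before the lemma, whose supremum is attained at $f\propto\psi_t$.
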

\begin{proof}
Every admissible $w$ differs from $\nabla\psi_t$ by a field $z$ with
$\nabla\cdot(p_tz)=0$. Integration by parts gives
$\E_{p_t}\nabla\psi_t\cdot z=0$, proving minimality by the Pythagorean
identity. Choose the mean-zero representative of $\psi_t$. Then
\[
\mathcal E_t(v)=\E_{p_t}(r_t^v\psi_t)
\le\sqrt{V_t(v)C_P(t)\mathcal E_t(v)}.
\]
Division proves the claim when the energy is positive; the zero case is immediate.
\end{proof}

\paragraph{Proof of the Wasserstein bound in Theorem~\ref{thm:terminal}.}
The corrected drift $v_t+\nabla\psi_t$ has marginals $p_t$ by
Eq.~\eqref{app:eq:normalized} and uniqueness. Couple its process $X_t$ with
the proposal $Y_t$ using the same additive Brownian noise and an optimal
initial coupling. The noise cancels. With
$D_t=(\E\|X_t-Y_t\|^2)^{1/2}$, the one-sided Lipschitz condition implies
\[
\tfrac12\partial_tD_t^2
\le L_tD_t^2+D_t\sqrt{\mathcal E_t(v)}.
\]
Regularization at $D_t=0$ and Gronwall give the slightly more general bound
\begin{equation}
W_2(p_1,\rho_1^v)\le K_v(0,1)W_2(p_0,\rho_0^v)
+\int_0^1K_v(t,1)\sqrt{\mathcal E_t(v)}\dd t.
\label{app:eq:w2-bound}
\end{equation}
This proves Eq.~\eqref{eq:w2-main}. Poincar\'e is needed only to replace
energy by variance. The energy bound itself does not require it.


\paragraph{What correction changes.}
Exact FK weighting targets $p_t$ despite proposal mismatch. The preceding
terminal bounds concern the unweighted proposal law $\rho_t^v$.
They do not bound the error of a finite corrected particle system.
The local weight result below explains a separate benefit of reducing
residual variance. Particle error additionally depends on weights,
dependence, resampling, and mixing \citep{delmoral2011concentration}.

\subsection{Proof of Proposition~\ref{prop:local-condition-number}}
\label{app:proof:local-weight}

\begin{proof}
Write $\overline W-1=\Delta r_t^v(X_t)+R_\Delta$ with
$\|R_\Delta\|_2=o(\Delta)$. Then
\[
\E(\overline W-1)^2
=\Delta^2\E_{p_t}(r_t^v)^2
+2\Delta\E[r_t^v(X_t)R_\Delta]+\E R_\Delta^2
=\Delta^2V_t(v)+o(\Delta^2).
\]
Because $\E\overline W=1$, this equals both
$\E\overline W^2-1$ and $\chi^2(\mathbb P\Vert\mathbb Q)$.
\end{proof}

One sufficient route to the assumed expansion is
$A_\Delta=\int_t^{t+\Delta}g_s^v(X_s)\dd s
=\Delta g_t^v(X_t)+o_{L^2}(\Delta)$ together with
$\|e^{A_\Delta}-1-A_\Delta\|_2=o(\Delta)$.
Normalizing $e^{A_\Delta}$ then subtracts $\E_{p_t}g_t^v$ at first order.
For example, bounded rates with local $L^2$ continuity satisfy these
conditions. Uniform conclusions require uniform remainder control.

\subsection{Why local variance does not order genealogy}
\label{app:sec:weight-curse}

Let $Z$ be uniform on $\{-1,1\}$ and $0<\epsilon<1$. Two weighting schemes
use $G_1=1+\epsilon Z$ and respectively $G_2=1-\epsilon Z$ or
$G_2=1+\epsilon Z$. Every increment has mean one and squared coefficient of
variation $\epsilon^2$ under this common reference law. However,
\[
\mathrm{CV}^2(G_1G_2)=0\quad\hbox{or}\quad
\frac{4\epsilon^2}{(1+\epsilon^2)^2},
\]
respectively. Thus marginal incremental second moments do not determine
cumulative weight variation. Resampling adds further dependence on timing.
For exactly Gaussian log weight $L$ with variance $s^2$,
\begin{equation}
\frac{\E e^{2L}}{(\E e^L)^2}=e^{s^2},\qquad
\mathrm{ESS}_{\rm pop}/N=e^{-s^2},
\label{app:eq:lognormal-ess}
\end{equation}
but the variance of a sum of log increments includes their cross-covariances.

For multinomial resampling with weights $w_i$, the probability that all
children share one parent is $\sum_iw_i^N$. Resampling resets numerical
weights without restoring ancestors. We therefore report both the number
of surviving initial ancestors and lineage ESS:
\begin{equation}
N_{\rm anc}=|\{a:\pi_a>0\}|,\qquad
\mathrm{ESS}_{\rm anc}=(\sum_a\pi_a^2)^{-1},
\label{app:eq:ancestry-ess}
\end{equation}
where $\pi_a$ is the fraction of descendants of ancestor $a$.

\subsection{Time discretization and approximate residuals}
\label{app:sec:numerical-error}

On a grid $0=t_0<\cdots<t_K=1$, hold $\lambda_k$ fixed on each interval.
If $|\partial_tV_t(\lambda_k)|\leq L_k$, then
\begin{equation}
\left|\int_{t_k}^{t_{k+1}}V_t(\lambda_k)\dd t
-\Delta_kV_{t_k}(\lambda_k)\right|\leq\tfrac12 L_k\Delta_k^2.
\label{app:eq:quadrature}
\end{equation}
Consequently the continuous-time proposal with that schedule satisfies
\begin{equation}
d_{\TV}(p_1,\rho_1^{\lambda_{0:K-1}})
\leq\tfrac12\sqrt{\sum_k\Delta_kV_{t_k}(\lambda_k)
+\tfrac12\sum_kL_k\Delta_k^2}.
\label{app:eq:grid-tv}
\end{equation}
The discretized sampler has an additional error
$d_{\TV}(\rho_1^\lambda,\rho_{1,h}^\lambda)$; the bound does not assert a
universal rate for it. If $r_t=\widehat r_t+\delta_t$, then
\begin{equation}
\|r\|_{L^2(\dd t\,p_t)}
\leq\|\widehat r\|_{L^2(\dd t\,p_t)}
+\|\delta\|_{L^2(\dd t\,p_t)}.
\label{app:eq:model-error}
\end{equation}
Score, divergence, and tracked-density errors therefore remain separate from
coefficient-estimation error.

\paragraph{Wasserstein model and numerical errors.}
The direct transport argument also separates approximation errors without
converting them to TV. If the true centered source is
$r_t=\widehat r_t+\delta r_t$, then
\[
\|p_tr_t\|_{-1,p_t}
\le\|p_t\widehat r_t\|_{-1,p_t}+\|p_t\delta r_t\|_{-1,p_t}
\le\sqrt{\widehat{\mathcal E}_t}+\sqrt{C_P(t)}\|\delta r_t\|_{L^2(p_t)}.
\]
The last step alone uses Poincar\'e. A drift error $\delta v_t$ changes the
source by a weighted divergence, whose negative norm is at most
$\|\delta v_t\|_{L^2(p_t)}$ by the dual definition. Apply
Eq.~\eqref{app:eq:w2-bound} to the actual drift with its corresponding
stability factor. These are deterministic model-error terms, not extra
coefficient-estimation penalties.

For a piecewise schedule, let
$F(t)=K_v(t,1)\sqrt{\mathcal E_t(v)}$ and suppose
$|F(t)-F(t_k)|\le\omega_k(t-t_k)$ on its $k$th interval. Then
\[
W_2(p_1,\rho_1^v)\le\sum_k\Delta_k F(t_k)
+\sum_k\int_0^{\Delta_k}\omega_k(s)\dd s.
\]
This follows by integrating the modulus bound and preserves the energy
profile until the final quadrature. A discretized law $\rho_{1,h}^v$ and
an empirical output $\widehat\rho_{1,h}^v$ add, by the triangle inequality,
$W_2(\rho_1^v,\rho_{1,h}^v)$ and
$W_2(\rho_{1,h}^v,\widehat\rho_{1,h}^v)$.
Finite reference samples add their own error. For SMC, this last term is
an interacting-particle error and cannot be replaced by an iid sampling
rate without justification.

\section{Finite-Particle Risk and Structured Calibration}
\label{app:sec:finite-particles}
\label{app:sec:vcg}

\subsection{Population projection}
\label{app:proof:projection}

The linear residual formula follows from
$\mathcal L_{v+u}^*p=\mathcal L_v^*p-\nabla\!\cdot(pu)$.
Expanding its variance gives
$V(\lambda)=V(0)+2d^\top\lambda+\lambda^\top C\lambda$.
For any $z\in\ker C$, $z^\top(a-\E a)=0$ almost surely, so $z^\top d=0$.
Thus $d\in\operatorname{range}(C)$ and completing the square proves
Eq.~\eqref{eq:oracle-identity}, including singular $C$.
This is the population VCG projection of \citet{ren2026driftlite}.

\subsection{Proof of Theorem~\ref{thm:out-of-fit}}
\label{app:proof:out-of-fit}

\begin{proof}
Apply Eq.~\eqref{eq:oracle-identity} to each realized fit and take expectation:
$\mathcal R(\widehat\lambda)=V_\star+P$. At Anchor,
$V(0)=V_\star+H$. Subtraction proves the identity.
\end{proof}

\subsection{Proof of Theorem~\ref{thm:shrinkage}}
\label{app:proof:shrinkage}

\begin{proof}
Set $e=\widehat\lambda-\lambda^\star$. Then
\[
\mathcal R(\alpha\widehat\lambda)-V_\star
=\E\|-(1-\alpha)\lambda^\star+\alpha e\|_C^2
=(1-\alpha)^2H+\alpha^2P-2\alpha(1-\alpha)B.
\]
The coefficient of $\alpha^2$ is
$D_0=H+P+2B=\E\|\widehat\lambda\|_C^2\geq0$.
For $D_0>0$, minimizing the convex quadratic gives
Eq.~\eqref{eq:exact-alpha}; clipping enforces the interval constraint.
When $B=0$, substitution of $H/(H+P)$ gives
Eq.~\eqref{eq:structured-oracle-gain}. If $D_0=0$, the fitted correction is
constant $p$-almost surely for almost every fit, so its amplitude has no
effect on variance.
\end{proof}

\subsection{Proof of Theorem~\ref{thm:structured-calibration}}
\label{app:proof:structured}

\begin{proof}
Since $S,\widehat P\geq0$ and $\varepsilon_{\rm num}>0$, the minimizer of
Eq.~\eqref{eq:structured-criterion} is
$\Pi_{[0,1]}((S-\widehat P)/(S+\varepsilon_{\rm num}))$.
It equals Eq.~\eqref{eq:practical-risk-calibration}, including $S\leq\widehat P$.
For the realized field, write
\begin{equation}
\Delta(\alpha)=V(\alpha\widehat\lambda)-V(0)
=q\alpha^2-2(q-\kappa)\alpha.
\label{app:eq:realized-ray-risk}
\end{equation}
For $q>0$, the true minimizer is
$\alpha_{\rm ray}=\Pi_{[0,1]}(1-\kappa/q)$; for $q=0$, the risk is constant.
Let $e=e_q+2e_\kappa$. Since $|\alpha^2-2\alpha|\leq1$ on $[0,1]$,
\[
-e\leq\widehat\Delta_{\rm S}(\alpha)-\Delta(\alpha)
-\varepsilon_{\rm num}\alpha^2\leq e.
\]
Optimality and $\widehat\Delta_{\rm S}(0)=0$ imply
$\Delta(\widehat\alpha)\leq\widehat\Delta_{\rm S}(\widehat\alpha)+e\leq e$.
Comparing instead to $\alpha_{\rm ray}$ gives
\[
\Delta(\widehat\alpha)
\leq\widehat\Delta_{\rm S}(\alpha_{\rm ray})+e
\leq\Delta(\alpha_{\rm ray})+2e+\varepsilon_{\rm num}.
\]
Finally, for a fixed dictionary, $d=-C\lambda^\star$ gives
$\kappa=\widehat\lambda^\top C(\widehat\lambda-\lambda^\star)$.
Taking expectation yields $\E_D\kappa=P+B$.
\end{proof}

If the estimation-error event has probability at least $1-\delta$, the same
conclusions hold with that probability. Establishing such an event requires
control of the sampling law, fitted field, and residual estimator. The
algebraic bound does not supply this control automatically.

\subsection{What the structured model assumes}
\label{app:sec:structured-interpretation}

\paragraph{Bias sensitivity of the deterministic oracle.}
Write $\alpha_0=H/(H+P)$ for $H+P>0$. For nonzero $B$, let
$\alpha_B$ be the constrained oracle in Eq.~\eqref{eq:exact-alpha}. Then
\begin{equation}
0\leq\mathcal R(\alpha_0\widehat\lambda)
-\mathcal R(\alpha_B\widehat\lambda)\leq |B|/2.
\label{app:eq:bias-sensitivity}
\end{equation}
Indeed, the omitted term $-2B\alpha(1-\alpha)$ has range width $|B|/2$,
and $\alpha_0$ minimizes the remaining quadratic. When the bias-aware oracle
is interior and $D_0=H+P+2B>0$, the exact regret is
\[
\frac{B^2(P-H)^2}{(H+P)^2D_0},\qquad
\frac{H+B}{D_0}-\frac{H}{H+P}
=\frac{B(P-H)}{(H+P)D_0}.
\]
These statements concern population constants and deterministic amplitudes.
They do not transfer unchanged to an estimated $\widehat\alpha$.
In particular, $\E\|\widehat\lambda\|_C^2=H+P+2B$, so subtracting a
penalty estimate does not universally give an unbiased estimate of $H$.

\paragraph{An exact normal-means interpretation.}
In whitened coordinates, suppose
\[
Z\mid\theta\sim\mathcal N(\theta,\sigma^2I_r),\qquad
\theta\sim\mathcal N(0,\tau^2I_r).
\]
Completing the square gives
$\E[\theta\mid Z]=\tau^2(\tau^2+\sigma^2)^{-1}Z$.
This posterior mean minimizes posterior squared loss. With
$H_{\rm prior}=r\tau^2$ and $P=r\sigma^2$, its amplitude is exactly
$H_{\rm prior}/(H_{\rm prior}+P)$.
For known $\sigma^2$, maximizing the marginal normal likelihood gives
$\widehat\tau^2=[\|Z\|^2/r-\sigma^2]_+$, whose plug-in amplitude is
$[1-r\sigma^2/\|Z\|^2]_+$ (zero when $Z=0$).
This is the RCPT algebra with $S=\|Z\|^2$, known penalty $P$, and
$\varepsilon_{\rm num}=0$. It is a classical empirical-Bayes shrinkage
interpretation \citep{efron1973stein}, not a claim that the fitted diffusion
controllers follow this model. Estimating the hyperparameters, geometry, and
penalty does not inherit exact posterior optimality.

\paragraph{Why Direct LOO need not improve the decision.}
For an interior realized oracle,
$\Delta(\alpha)-\Delta(\alpha_{\rm ray})=q(\alpha-\alpha_{\rm ray})^2$.
A lower-variance estimate of the amplitude can therefore be preferable even
if its criterion is biased. At a boundary, the exact regret also includes
$2(q\alpha_{\rm ray}-(q-\kappa))(\alpha-\alpha_{\rm ray})\geq0$.
The criterion-estimation tradeoff is consistent with established comparisons
of covariance penalties and cross-validation \citep{efron2004covariance}.
It is not a universal dominance result for structured RCPT. The common-fit audit in Appendix~\ref{app:sec:ray-audit} illustrates the distinction: smaller unweighted amplitude error need not give smaller curvature-weighted ray regret. Accurate quadratic coefficients are sufficient, but not necessary, for a good decision; multiplying the entire criterion by a positive constant leaves its minimizer unchanged. Likewise, a zero-mean criterion correction at each fixed $\alpha$ need not have zero mean after minimizing over $\alpha$.

\subsection{Finite-particle penalty and its proof}
\label{app:sec:finite-proof}
\label{app:thm:finite}

\begin{lemma}[Finite-particle estimation penalty]
\label{lem:finite-particle}
Let $X_1,\ldots,X_N\overset{\mathrm{iid}}\sim p$ with a fixed dictionary,
$A=a-\E_pa$, $\epsilon=g^0-\E_pg^0+A^\top\lambda^\star$, and
$\Omega=\E_p[\epsilon^2AA^\top]$. Suppose $C=\E_pAA^\top$ has fixed rank,
its nonzero eigenvalues are positive, $\E\|A\|^4<\infty$,
$\E\epsilon^2<\infty$, $\E[\epsilon^2\|A\|^2]<\infty$, and empirical rank
is eventually equal to population rank in probability. Use minimum-norm
least squares, or ridge $\eta_N=o(N^{-1/2})$ in the normalized objective.
On $\operatorname{range}(C)$,
\begin{equation}
\sqrt N(\widehat\lambda-\lambda^\star)
\xrightarrow{d}\mathcal N(0,C^\dagger\Omega C^\dagger).
\label{eq:finite-particle}
\end{equation}
If $\{N\|\widehat\lambda-\lambda^\star\|_C^2\}$ is also uniformly integrable,
then $P=N^{-1}\tr(\Omega C^\dagger)+o(N^{-1})$.
\end{lemma}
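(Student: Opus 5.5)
The plan is the classical M-estimator (sandwich) argument for least squares, applied on $\operatorname{range}(C)$. I will then convert weak convergence into convergence of the scaled risk using uniform integrability.

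\textbf{Normal equations.} With empirical centering, the fitted coefficient solves $\widehat C\widehat\lambda=-\widehat d$, taking the minimum-norm solution. Write $\widehat C=N^{-1}\sum_n(A_n-\bar A)(A_n-\bar A)^\top$. By definition, $g^0_n-\E g^0=\epsilon_n-A_n^\top\lambda^\star$, so
\[
\widehat d=N^{-1}\sum_n(A_n-\bar A)(\epsilon_n-\bar\epsilon)-\widehat C\lambda^\star .
\]
Hence, on the event that $\widehat C$ has population rank, and with $\Pi$ the projector onto $\operatorname{range}(C)$,
\[
\Pi(\widehat\lambda-\lambda^\star)=-\widehat C^\dagger\,N^{-1}\sum_n(A_n-\bar A)(\epsilon_n-\bar\epsilon).
\]
A first check is that $\lambda^\star$ lies in $\operatorname{range}(C)$, which holds since $\lambda^\star=-C^\dagger d$. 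A second check is that $A\in\operatorname{range}(C)$ almost surely, because $z^\top A=0$ a.s. for $z\in\ker C$ (as in Appendix~\ref{app:proof:projection}). Thus the empirical range is contained in the population range, and equal rank forces equality.

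\textbf{Ingredients.} The LLN with $\E\|A\|^4<\infty$ gives $\widehat C\to C$ a.s. On the equal-rank event, pseudoinverse continuity under constant rank gives $\widehat C^\dagger\to C^\dagger$ in probability. By the population normal equations, $\E[A\epsilon]=\E[A(g^0-\E g^0)]+C\lambda^\star=d-d=0$. The CLT with $\E[\epsilon^2\|A\|^2]<\infty$ then gives $N^{-1/2}\sum_nA_n\epsilon_n\xrightarrow{d}\mathcal N(0,\Omega)$. The centering corrections are $\sqrt N\,\bar A\bar\epsilon=O_p(N^{-1/2})$. Slutsky then yields \eqref{eq:finite-particle}.

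\textbf{Ridge.} The ridge solution with $\eta_N=o(N^{-1/2})$ differs from the minimum-norm one by $O_p(\eta_N)$ on the range, since $\widehat C$ has eigenvalues bounded away from zero there with high probability. This difference is $o_p(N^{-1/2})$ and is negligible. Components in $\ker C$ do not affect $\|\cdot\|_C$.

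\textbf{Penalty.} By the continuous mapping theorem, $N\|\widehat\lambda-\lambda^\star\|_C^2\xrightarrow{d}Z^\top CZ$ with $Z\sim\mathcal N(0,C^\dagger\Omega C^\dagger)$. Its mean is $\tr(C\,C^\dagger\Omega C^\dagger)=\tr(\Omega C^\dagger)$, using $C^\dagger CC^\dagger=C^\dagger$. Uniform integrability upgrades convergence in distribution to convergence of means, so $NP\to\tr(\Omega C^\dagger)$.

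\textbf{Main obstacle.} I expect the delicate step to be the rank and pseudoinverse argument: the pseudoinverse is discontinuous at rank changes. I will handle this by working on the event that the empirical rank equals the population rank, whose probability tends to one by assumption, and noting that convergence in distribution is unaffected by events of vanishing probability. Uniform integrability is assumed, so no moment bound on $\widehat C^\dagger$ is needed.
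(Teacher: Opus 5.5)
Your proposal is correct and follows the paper's own route. Both arguments linearize the centered normal equations to $\widehat\lambda-\lambda^\star=-C^\dagger N^{-1}\sum_n A_n\epsilon_n+o_p(N^{-1/2})$, treat the centering and ridge contributions as negligible, and then apply the multivariate CLT, the quadratic continuous-mapping theorem, and uniform integrability to obtain $NP\to\tr(\Omega C^\dagger)$; you derive the linearization exactly with $\widehat C^\dagger$ on the equal-rank event and finish with Slutsky, and your checks that $A\in\operatorname{range}(C)$ almost surely and that $\E[A\epsilon]=d+C\lambda^\star=0$ supply details the paper leaves implicit.
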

\begin{proof}
The population normal equation is $\E A\epsilon=0$. The centered empirical
normal equations and consistency of the covariance inverse on its range give
\begin{equation}
\widehat\lambda-\lambda^\star
=-C^\dagger\frac1N\sum_nA_n\epsilon_n+o_p(N^{-1/2}).
\label{app:eq:ols-linearization}
\end{equation}
Sample centering contributes $O_p(N^{-1})$; the stipulated ridge contributes
$o_p(N^{-1/2})$. Apply the multivariate CLT. The quadratic continuous-mapping
theorem and uniform integrability then yield
$NP\to\tr(CC^\dagger\Omega C^\dagger)=\tr(\Omega C^\dagger)$.
\end{proof}

The reduction to $\sigma_\epsilon^2\rank(C)/N$ requires
$\Omega=\sigma_\epsilon^2C$. RCPT further substitutes a deleted prediction
residual for $\sigma_\epsilon^2$, effective rank for rank, and weight ESS for
sample count. These choices define its low-particle model; the asymptotic
lemma does not assert that they are accurate near interpolation.

\paragraph{Ridge and dependent particles.}
Fixed population ridge targets $\lambda_\eta=-(C+\eta I)^{-1}d$ and incurs
\begin{equation}
V(\lambda_\eta)-V(\lambda^\star)
=\eta^2(\lambda^\star)^\top(C+\eta I)^{-1}
C(C+\eta I)^{-1}\lambda^\star.
\label{app:eq:ridge-bias}
\end{equation}
Thus the preceding centered CLT is not a fixed-ridge theorem.
For independent observations with deterministic normalized weights, analogous
weighted normal equations give leading variance
$(\sum_n\omega_n^2)\tr(\Omega C^\dagger)$ under weighted consistency,
a weighted CLT, and uniform integrability. Random importance weights and
interacting particles require their own covariance analysis.
If a particle-system CLT for the regression score has covariance $\Gamma_t$,
the same linearization, covariance consistency, and uniform integrability
give $N^{-1}\tr(\Gamma_tC^\dagger)$ instead. Conventional weight ESS does
not estimate this covariance in general; equal weights after resampling do
not imply independent observations.

\paragraph{Frozen-schedule consequence.}
\phantomsection\label{app:thm:end-to-end}
For a schedule fitted independently of deployment, combine the projection
identity with the dynamic Wasserstein bound. If $\widetilde\lambda_k$ is
held on $[t_k,t_{k+1})$ and the assumptions of
Theorem~\ref{thm:terminal} hold, then
\begin{equation}
W_2(p_1,\rho_1^{\widetilde\lambda})
\le\sum_k\int_{t_k}^{t_{k+1}}K_v(t,1)
\sqrt{C_P(t)\{V_{t,\star}
+\|\widetilde\lambda_k-\lambda_t^\star\|_{C_t}^2\}}\dd t.
\label{app:eq:end-to-end}
\end{equation}
This retains the time profile of the estimation error. Theorem~\ref{thm:structured-calibration} controls the selected coefficient at each grid point under its calibration assumptions. Interpolation, model error,
and numerical integration require the separate controls above. A smaller
upper bound alone does not order endpoint errors. A guarantee for the
same-particle online loop additionally requires analysis of that interacting process.

\section{Calibration Ablation and ECG Baseline}
\label{app:sec:calibration-baselines}

\subsection{Direct LOO}
\label{app:sec:direct-loo}

Using the deleted means and fits in the main text, set
$u_n=g_n^0-\bar g_{-n}^0$ and
$z_n=(a_n-\bar a_{-n})^\top\widehat\lambda^{(-n)}$. Define
$A_D=\sum_n\omega_nu_n^2$, $b_D=\sum_n\omega_nu_nz_n$, and
$c_D=\sum_n\omega_nz_n^2$. Then
\begin{equation}
\widehat L_D(\alpha)=A_D+2b_D\alpha+c_D\alpha^2,\qquad
\widehat\alpha_D=
\begin{cases}\Pi_{[0,1]}(-b_D/c_D),&c_D>0,\\0,&c_D=0.\end{cases}
\label{app:eq:direct-loo}
\end{equation}
The formula follows by minimizing a scalar quadratic. The intercept must be
$\bar g^0_{-n}+\alpha\bar a_{-n}^\top\widehat\lambda^{(-n)}$ at each
amplitude; holding the unscaled intercept fixed changes the objective.
Both selectors deploy the full-data coefficient, not an average of deleted
coefficients. Deletion weights are renormalized and the same ridge
normalization is used in every fold.

Direct LOO estimates deleted prediction loss without a scalar covariance model. It changes only the calibration criterion, while retaining the full-data VCG direction. Appendix~\ref{app:sec:reference-transfer} distinguishes target-law risk from the reference-law diagnostics and numerical guards used in deployment.

\subsection{Residual fitting versus energy fitting}
\label{app:sec:ecg}

VCG minimizes $\Var_p(g^0+\mathcal T_pu)$ in a vector-field class.
For gradient controls $u=\nabla\psi$, energy-controlling guidance (ECG)
instead uses the weak Poisson objective \citep{ren2026driftlite}
\begin{equation}
\min_\psi\;\tfrac12\E_p\|\nabla\psi\|^2-\E_p[r^0\psi].
\label{app:eq:ecg-energy}
\end{equation}
Its first variation is
$\E_p\nabla\psi\cdot\nabla h=\E_p r^0h$, equivalent under integration by
parts to $-\nabla\!\cdot(p\nabla\psi)=pr^0$.
For $\psi=\sum_j\theta_j\psi_j$, ECG solves $M\theta=b$, with
$M_{ij}=\E_p\nabla\psi_i\cdot\nabla\psi_j$ and
$b_i=\Cov_p(\psi_i,g^0)$. The two finite-dimensional objectives need not
select the same field.

ECG provides a distinct proposal-fitting baseline. Its scalar-potential
class does not automatically contain arbitrary gated vector fields used by
VCG and RCPT. Conversely, ECG's energy objective avoids squared Stein
features in its normal matrix. These are representation and objective
differences, not a general superiority result. ECG also estimates its
moments from finite particles. Comparisons should state the available field
class, fitting budget, and whether correction weights and resampling are
retained; no resampling alone does not mean unweighted Proposal-Only sampling.

\section{Scalar and Directional Calibration}
\label{app:sec:directional}

\subsection{The oracle benefit of directional shrinkage}

Scalar shrinkage is a restricted decision class, not an isotropy assumption.
On the range of $C$, write $\theta=C^{1/2}\lambda^\star$ and
$Z=C^{1/2}\widehat\lambda=\theta+\xi$. Suppose $\E\xi=0$ and define
$H_j=\theta_j^2$, $P_j=\E\xi_j^2$. For fixed diagonal amplitudes,
\begin{equation}
\E\|\operatorname{diag}(\alpha_j)Z-\theta\|^2
=\sum_j\{(1-\alpha_j)^2H_j+\alpha_j^2P_j\}.
\label{app:eq:directional-risk}
\end{equation}
No independence between the coordinates is needed. Cross-covariances do
not enter a sum of coordinatewise squares.

\begin{proposition}[Exact directional oracle gain]
\label{prop:directional-gain}
Write $T_j=H_j+P_j$. If all $T_j=0$, every amplitude has the same risk. Otherwise omit coordinates with $T_j=0$. The directional
oracle is $\alpha_j^*=H_j/T_j$, the scalar oracle is
$\alpha_s^*=\sum_jH_j/\sum_jT_j$, and their risk difference is
\begin{equation}
\mathcal R_{\rm scalar}^*-\mathcal R_{\rm directional}^*
=\sum_jT_j(\alpha_j^*-\alpha_s^*)^2\ge0.
\label{app:eq:directional-gain}
\end{equation}
\end{proposition}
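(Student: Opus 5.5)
The plan is to start from the coordinatewise risk decomposition in Eq.~\eqref{app:eq:directional-risk}. Write $f_j(\alpha)=(1-\alpha)^2H_j+\alpha^2P_j$, so that both oracles minimize $\sum_j f_j$ over different decision sets. The directional oracle minimizes each $f_j$ freely; the scalar oracle constrains all $\alpha_j$ to a common $\alpha$. The degenerate case is immediate: if all $T_j=0$, every $f_j$ is identically zero. Coordinates with $T_j=0$ contribute $f_j\equiv0$ under any amplitude, so they can be dropped without changing either risk.

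For $T_j>0$ I complete the square in each coordinate:
\[
f_j(\alpha)=T_j\alpha^2-2H_j\alpha+H_j=T_j\Bigl(\alpha-\frac{H_j}{T_j}\Bigr)^2+H_j-\frac{H_j^2}{T_j}.
\]
This gives $\alpha_j^*=H_j/T_j$, which lies in $[0,1]$ automatically since $0\le H_j\le T_j$, so no clipping is needed. It also gives the minimal coordinate risk $H_j-H_j^2/T_j$. Summing, the total risk under arbitrary amplitudes is
\[
\sum_jT_j(\alpha_j-\alpha_j^*)^2+\mathcal R_{\rm directional}^*,
\qquad \mathcal R_{\rm directional}^*=\sum_j\bigl(H_j-H_j^2/T_j\bigr).
\]

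Setting all $\alpha_j=\alpha$, the scalar risk is a quadratic $\mathcal R_{\rm directional}^*+\sum_jT_j(\alpha-\alpha_j^*)^2$. This is a $T_j$-weighted least-squares problem in $\alpha$, so its minimizer is the weighted mean $\sum_jT_j\alpha_j^*/\sum_jT_j=\sum_jH_j/\sum_jT_j=\alpha_s^*$, again in $[0,1]$. Substituting $\alpha=\alpha_s^*$ yields Eq.~\eqref{app:eq:directional-gain} directly, and nonnegativity is evident from the sum of squares.

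The only step requiring care, and the one I would double-check, is the justification of Eq.~\eqref{app:eq:directional-risk} itself. It needs $\E\xi=0$ so that the cross term $-2\alpha_j(1-\alpha_j)\theta_j\E\xi_j$ vanishes; this is the $B=0$ analogue of Theorem~\ref{thm:shrinkage} applied coordinatewise. Since the proposition is stated for this risk, I take it as given. I would remark that the identity needs no independence across coordinates, because only diagonal second moments appear.
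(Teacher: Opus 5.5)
Your proof is correct and follows the paper's argument. Like the paper, you complete the square in each coordinate, note that the minimizers $H_j/T_j$ already lie in $[0,1]$, and identify the common-amplitude minimizer $\alpha_s^*$. Your decomposition of the total risk as $\mathcal R_{\rm directional}^*+\sum_jT_j(\alpha_j-\alpha_j^*)^2$ is a more explicit form of the paper's ``subtract the coordinatewise minima'' step: it shows directly that $\alpha_s^*$ is the $T_j$-weighted mean of the $\alpha_j^*$ and that the risk gap is $\sum_jT_j(\alpha_j^*-\alpha_s^*)^2$.
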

\begin{proof}
Complete each square in Eq.~\eqref{app:eq:directional-risk}. The unconstrained
minimizers lie in $[0,1]$. Under a common amplitude, minimizing the sum gives
$\alpha_s^*$. Subtract the coordinatewise minima to obtain the identity.
\end{proof}

For example, let $\theta=(1,0)$ and $\xi=(0,\varepsilon)$ with
$\E\varepsilon=0$, $\E\varepsilon^2=L>0$. The directional oracle keeps
the first coordinate and removes the second, giving zero coefficient risk.
The scalar oracle has risk $L/(1+L)$. Grouping coordinates replaces
$H_j,P_j$ by their group sums. A zero-amplitude group is a controller
truncation. Thus reduced rank and shrinkage are complementary decisions.
These are oracle comparisons; estimating additional amplitudes has a cost.

\subsection{An exact sample on which directional LOO is worse}

Extra amplitudes can fit deletion noise even when they use the same fits.
Let $a$ be uniform on $\{e_1,-e_1,e_2,-e_2\}$, let $g$ be an independent
uniform sign, and use $\eta=1$, $N=4$. The ordered sample
\[
D=((e_1,-1),(e_1,-1),(e_2,1),(-e_2,1))
\]
has probability $8^{-4}$. Direct calculation gives
$\widehat\lambda_D=(2/5,0)$ and the deleted residual representation
$u_i+v_i^\top\alpha$ with
\begin{equation}
u=(-4/3,-4/3,4/3,4/3)^\top,\qquad
(v_i^\top)_{i=1}^4=
\begin{pmatrix}8/33&0\\8/33&0\\-8/39&16/39\\-8/39&16/39\end{pmatrix}.
\label{app:eq:directional-counterexample}
\end{equation}
The first two deletion coefficients are $(4/11,0)$; the last two are
$(4/13,4/13)$ and $(4/13,-4/13)$, which verifies the matrix using the
deleted means. The scalar criterion has minimizer $143/290$.
For the two-coordinate criterion, at $(1,0)$ both residual types are
nonzero, the derivative in coordinate one is negative, and the derivative
in coordinate two is positive. These are the constrained optimality
conditions at that corner. The two distinct row vectors span the plane,
so strict convexity makes $(1,0)$ the unique minimizer.

Population risk is $V(\lambda)=1+\|\lambda\|^2/2$. Deploying the full fit
with these amplitudes therefore gives
\[
V_{\rm scalar}=1+\frac{2}{25}\left(\frac{143}{290}\right)^2
\simeq1.01945,\qquad V_{\rm directional}=1+\frac{2}{25}=1.08.
\]
This is a positive-probability counterexample to samplewise dominance of
directional LOO. It is not a comparison of their risks averaged over all
possible fitting samples. Together with Proposition~\ref{prop:directional-gain},
it separates the benefit of a richer oracle from the reliability of estimating it.

\subsection{When the bias term can be omitted}
\label{app:sec:bias-conditions}

Projection orthogonality alone does not imply $B=0$. One sufficient
condition is a correctly specified conditional linear model
$g=c-a^\top\lambda^\star+\varepsilon$, with $\E[\varepsilon\mid a]=0$,
full empirical rank, unregularized least squares, and integrable coefficients.
Conditioning on the design gives $\E[\widehat\lambda\mid a_{1:N}]=\lambda^\star$,
so $B=0$. This is stronger than the population normal equation.

A centered first-order CLT is also insufficient by itself. A coefficient
bias of order $N^{-1}$ can make $B$ the same order as the estimation
penalty $P$. The general bound
$|B|\le\sqrt H\|\E\widehat\lambda-\lambda^\star\|_C\le\sqrt{HP}$
follows from Cauchy--Schwarz, but need not be small relative to $P$.
In a noiseless scalar example with ridge contraction
$\widehat\lambda=c\lambda^\star$, $0<c<1$, one has
$P=(1-c)^2H$ and $B=-(1-c)H$. The constrained risk oracle retains the
entire fitted update, whereas $H/(H+P)<1$ shrinks it again.
This proves that ridge can invalidate the zero-bias interpretation even
without sampling noise. The paper's bias audit is empirical evidence for
its audited regime, not a universal justification for dropping $B$.

\section{Specialization to FKC and ACE}

Let expert $i$ provide density $q_t^{(i)}$, score
$s_t^{(i)}=\nabla\log q_t^{(i)}$, and probability-flow velocity $v_t^{(i)}$.
For a valid exponent schedule $\gamma_i(t)$,
\begin{equation}
 p_t(x)=\frac{1}{Z_t}\prod_{i=1}^{I}q_t^{(i)}(x)^{\gamma_i(t)},
 \qquad s_t(x)=\sum_i\gamma_i(t)s_t^{(i)}(x).
 \label{app:eq:composed-path}
\end{equation}
In the implementation's SDE parameterization, the selected probability-flow
component enters
\begin{equation}
 g_t^v=\nabla\!\cdot v_t+v_t\cdot s_t+b_t,
 \label{app:eq:fkc-potential}
\end{equation}
up to a spatial constant, where $\eta_t$ is the scalar linear-drift
coefficient fixed by the underlying diffusion schedule and
\begin{align}
 b_t(x)
 &=\eta_t x\cdot s_t(x)
 +\sum_i\dot\gamma_i(t)\log q_t^{(i)}(x) \notag\\
 &\quad-\sum_i\gamma_i(t)
 \left\{\nabla\!\cdot v_t^{(i)}(x)+
 v_t^{(i)}(x)\cdot s_t^{(i)}(x)\right\}.
 \label{app:eq:b-term}
\end{align}
The omitted divergence $\nabla\!\cdot(\eta_tx)=d\eta_t$ is spatially
constant and irrelevant to normalized weights.

The generic theory requires a valid path. VCG applies directly when the
fixed-exponent path is valid.  If fixed exponents create an invalid or
singular intermediate path, ACE first repairs the schedule while preserving
the desired endpoints \citep{lee2026collapse}. Proposal fitting then acts
on the repaired path. RCPT does not substitute for path repair and does not
change Eq.~\eqref{app:eq:composed-path}.

\section{Implementation Details}
\label{app:sec:implementation}

\subsection{Stable weighted least squares}

For normalized current particle weights $\omega_n$, the code minimizes
\begin{equation}
 \min_{\lambda,c}\sum_{n=1}^{N}\omega_n
 \left(g_n^0+A_n\lambda-c\right)^2+\eta\norm{\lambda}_2^2.
 \label{app:eq:wls}
\end{equation}
The intercept is eliminated by weighted centering.  Covariances and the
$m\times m$ linear solve are computed in float64; singular systems use a
pseudoinverse.  A coefficient clip is available in the high-dimensional
experiments.  After clipping, the code recomputes the selected empirical
variance and uses the anchor coefficient $\lambda=0$ if it increased beyond a
floating-point tolerance.  The chosen proposal is used consistently for state
propagation and all tracked $\log q_i$ updates.

Theorem~\ref{thm:structured-calibration} analyzes the scalar-selected field
before an additional coefficient clip. If clipping changes it to
$\lambda_{\rm final}$, let
$e_{\rm clip}=\|\lambda_{\rm final}-\widehat\alpha\widehat\lambda\|_C$.
The triangle inequality for centered residuals gives
\begin{equation}
\sqrt{V(\lambda_{\rm final})}
\leq\sqrt{V(\widehat\alpha\widehat\lambda)}+e_{\rm clip}.
\label{app:eq:clipping-error}
\end{equation}
An Anchor fallback has exactly the Anchor risk, but an empirical-variance
guard alone is not an out-of-fit guarantee. The realized deployed field and
any clipping therefore need to be distinguished in applying the theory.
The logged amplitude is the scalar selected before componentwise clipping;
if an empirical guard falls back to Anchor, it remains a diagnostic of the
pre-guard decision rather than the final coefficient.

The logged diagnostics are the anchor variance, selected variance, regularized
within-class floor, removable fraction
\begin{equation}
 R_t^2=\frac{\widehat\Var(g_t^0)-\widehat V_{t,*}}
 {\widehat\Var(g_t^0)},
\end{equation}
coefficient norm, conditional ESS before and after selection, conventional ESS,
and resampling events.  Same-particle floors are optimization diagnostics, not
held-out estimates of the population floor.

\subsection{Basis and divergence by domain}

\paragraph{Two dimensions.}
The controlled
ACE experiment anchors at its standard velocity and uses every coordinate
vector field whose scalar coefficient is a monomial of total degree at most
two.  Exact automatic differentiation computes divergence in two dimensions.

\paragraph{Images.}
We use the official pretrained DLSM CIFAR-10 score model and noise-conditioned
classifier at upstream commit \texttt{259c767}. For class $c$, write
$h_c=\log p_\phi(c\mid x,t)$, $u_c=\nabla h_c$, and $b_t=\nabla\log q_t$.
For target $\widetilde p_t=q_t\exp(\beta h_c)$ and matched classifier-guided
VE probability-flow Anchor with $\gamma=\beta$, the raw response in generation
time $s=1-t$ is
\begin{equation}
 g_{\beta,\beta}^{\mathrm{PF}}
 =-\beta\partial_t h_c+\beta\mathsf a(t)b_t^\top u_c
 +\tfrac12\mathsf a(t)\left(\beta\Delta h_c+\beta^2\norm{u_c}_2^2\right),
 \label{app:eq:dlsm-anchor-response}
\end{equation}
where $\mathsf a(t)$ is the forward variance rate. This identity assumes an
exact VE heat-flow base density; learned scores and Hutchinson traces yield
a plug-in approximation. The response is centered for residual fitting.
For the terminal cohort, $\beta=\gamma=2.5$. The implementation represents
this Anchor as a scale-1.5 base plus the same fixed $+1$ target-class
coefficient in every arm. Only the nine remaining fields
$\phi_j=\tfrac12\mathsf a(t)\nabla\log p_\phi(j\mid x,t)$, $j\ne c$,
are fitted, with target Stein features
$\mathcal T_{p_t}\phi_j=\nabla\!\cdot\phi_j+\phi_j^\top(b_t+\beta u_c)$.
Thus zero auxiliary correction recovers the matched scale-2.5 Anchor.
One realized Rademacher Hutchinson probe is shared across methods within each
class--seed--time cell. Fit states are forward-noised training images;
independent test images provide held-out states. Ridge is $10^{-4}$.
Schedules are fitted at 16 midpoint times and linearly interpolated during
500-step deterministic VE probability-flow Euler sampling (PFO-500).
The separate unit-power mechanism audit uses $\beta=\gamma=1$.
Neither terminal images nor evaluation metrics enter fitting or calibration.
Forward-noised labeled data are a proxy for the model-defined target path;
these common-state risks do not directly evaluate its exact expectation.

\paragraph{Molecules.}
States are flattened padded molecular coordinates and attributes, with all
existing projections and masks retained.  In scaffold decoration, each of the
four expert probability-flow velocities is crossed with degree zero, one, and
two bounded relative-density Legendre gates, giving the fixed 12-field basis
in Eq.~\eqref{app:eq:scaffold-basis12}.  Sixteen particles fit the VCG or
RCPT schedule, and that schedule is frozen for independent
deployment.  Finite controller coefficients, feature matrices, state
increments, and residuals are checked before every accepted update; these VCG-based arms
share the stated numerical guards.

\paragraph{Reward-tilted molecules.}
The EEGSDE basis~\citep{bao2023equivariant} begins with
$\nu_t\nabla R_{\rm polarizability,t}$ and
$\nu_t\nabla R_{\rm gap,t}$, where $\nu_t$ is the one-step reverse-transition
variance, and crosses these families with the fixed bounded Legendre gates.
Each resulting field is normalized by its fit-batch RMS over active state
coordinates before regression; the frozen schedule stores these scales. The
base conditional diffusion drift and scaled FKC Anchor are fixed parts of the
proposal and are not counted as controller fields.
Autodifferentiation supplies reward gradients and Stein divergences.  A
same-state time finite difference evaluates both $\dot\beta_{j,t}R_{j,t}$ and
$\beta_{j,t}\partial_tR_{j,t}$ in the reward-path potential.  Generation
uses the training reward networks; all reported properties use independently
trained evaluators, and molecular identities use RDKit canonical isomeric
SMILES.

\subsection{Deletion scope and computational complexity}

Deletion refits in this description act on the evaluated regression features.
When gate statistics or field normalizations are estimated from the same
cohort and then held fixed across folds, this is feature-level deletion rather
than deletion of the entire field-construction pipeline. The fixed-dictionary
statistical interpretation requires that construction to be fixed or
conditioned on with a justified sampling law. The realized-field bound in
Theorem~\ref{thm:structured-calibration} still applies, but its calibration
errors must include any resulting adaptivity. No iid claim is made solely
because the feature matrix has been frozen.

After field values and divergences are available, weighted covariance formation and the VCG solve cost \(O(Nm^2+m^3)\).  \ours{} reuses those features and adds only dense leave-one-out algebra, with no expert or denoiser call.  Hardware-identified timing, peak VRAM, and measured particle scaling are reported in Appendix~\ref{app:sec:cost}.

\begin{algorithm}[htbp]
\caption{Online VCG/\ours{} Sampling Loop}
\label{app:alg:full}
\textbf{Input:} valid path, Anchor, basis, ridge, correction flag, particles $X_0^{[n]}\sim p_0$.
Initialize log weights and tracked expert log densities. At each step:
\begin{enumerate}
\setlength{\itemsep}{0pt}
\item Evaluate target score, Anchor, fields, and divergences; form the rate and feature matrix.
\item Normalize weights and solve Eq.~\eqref{app:eq:wls}. For RCPT, compute deletion residuals and the structured amplitude.
\item Apply the configured coefficient clip and empirical-variance fallback.
\item Propagate under the selected drift and update tracked densities using that same drift.
\item If correction is retained, update FK weights. If the resampling rule triggers, resample states and tracked quantities with common ancestry and reset weights.
\end{enumerate}
Return particles, weights (if retained), and diagnostics.
\end{algorithm}

\subsection{Evaluation laws and dependent particles}
\label{app:sec:reference-transfer}

The theoretical risk is defined under the target marginal $p_t$. Common-state evaluation compares methods on identical states but estimates risk under the reference law; active-path evaluation also changes the state distribution with the proposal. Neither is automatically a target-law expectation. The Checker corrected-SMC reference is an approximation, and the molecular common-state audit does not quantify its gap to the target law.

Equal weights after resampling do not imply independent fitting information. Particles can share ancestors, and deleting one descendant can leave closely related states in the fitting ensemble. Weight ESS therefore does not by itself quantify regression information or justify an online guarantee. Lineage-aware deletion and particle-system concentration require a separate analysis \citep{delmoral2011concentration}.

Data-derived features, coefficient clipping, and empirical fallbacks are part of the deployed estimator. The conditional structured bound evaluates a fixed fitted direction; additional transformations need their own error control, and an empirical Anchor fallback is not an independent out-of-fit certificate. These distinctions delimit the interpretation of the reported risk audits.

\section{Additional Evaluations}
\label{app:sec:experiments}

\subsection{Controlled targets}
\label{app:sec:2d-formulation}

\paragraph{Checker.}
Draw $U\sim\mathrm{Unif}[-2,2]$, $V\sim\mathrm{Unif}[0,1]$, and
$K\sim\mathrm{Bernoulli}(1/2)$, set
\begin{equation}
X_{\rm raw}=2U,\quad
Y=2\{V-2K+(\lfloor U\rfloor\bmod 2)\},\quad
X=(X_{\rm raw}+2\bmod 8)-4,
\label{app:eq:checker-definition}
\end{equation}
and condition on $A=\mathbf1\{X\geq0\}=1$ and
$B=\mathbf1\{X+Y\geq0\}=1$.  This gives the exact alternating checker law
on $[-4,4]^2$.  Three separately trained expert paths define the ACE target.
Anchor is only the learned joint expert velocity
$v_{XY\mid B}$; it is not the sum of all three expert velocities.  The other
paths enter the time-dependent multiplicative target, while VCG and RCPT add a
12-field coordinate-polynomial correction (six monomials of total degree at
most two for each of two velocity coordinates) to the shared Anchor.  At every
fitted time, Anchor, VCG, and RCPT share the same target, expert
evaluations, integration grid, ridge convention, and random streams.  A
schedule fitted on 16 particles is frozen before 2,000 fresh Proposal-Only or SMC trajectories are generated in the primary comparison (Table~\ref{tab:six-baseline-2d}). All runs use 200 integration steps, ridge $10^{-4}$, SMC ESS threshold $0.7$, and the expert schedule \texttt{default\_linear / 1-t**2 / cos\_t}. The calibration ablation uses these same seeds and deployments. Its residual diagnostic is an active-path integral, not an independent target-law risk estimate.

\paragraph{Illustrative replay and calibration ablation.}
The main Checker figure uses one seed, 16 fitting particles, 200 integration steps,
and the exact common-endpoint schedule \texttt{default\_linear / 1-t**2 / cos\_t}.
Each frozen schedule generates 10,000 particles. Endpoint estimates use fixed
2,000-particle audit indices; Proposal-Only has no FK weights, resampling, or
terminal reweighting. The accompanying SMC replay retains correction.
\begin{table}[!htb]
\caption{\textbf{Single-seed Checker figure audit.} This replay is illustrative, not an additional ten-seed estimate. Residual is an active-path integral and is not pooled across deployment modes.}
\label{app:tab:checker-new}
\centering\small
\setlength{\tabcolsep}{6pt}
\begin{tabular}{@{}lrrrr@{}}
\toprule
Mode & Controller & $W_2$ & MMD & Residual\\
\midrule
Proposal & Anchor & 0.7689 & 0.0645 & 668.21\\
Proposal & VCG & 1.1933 & 0.2572 & 3087.30\\

Proposal & ECG & 1.3156 & 0.4725 & 291.44\\
Proposal & \ours{} & 0.6819 & 0.0670 & 30.03\\
\midrule
SMC & Anchor & 1.3888 & 0.1387 & 48.64\\
SMC & VCG & 5.7207 & 6.9463 & 5673.08\\

SMC & ECG & 1.7622 & 0.4858 & 152.01\\
SMC & \ours{} & 0.9445 & 0.1235 & 40.87\\
\bottomrule
\end{tabular}
\end{table}\begin{table}[!htb]
\caption{\textbf{Checker calibration ablation.} The same ten paired seeds, 16 fitting particles, fixed schedule, and 2,000-particle deployments as Table~\ref{tab:six-baseline-2d}. Values are means. Direct LOO replaces the structured amplitude criterion. Residual is the integrated active-path diagnostic, not independent target-law risk; its Direct LOO Proposal-Only mean is driven by one severe run. Ancestors counts terminal SMC lineages out of 2,000; without resampling every trajectory retains its own ancestor.}
\label{app:tab:checker-direct}
\centering\small
\setlength{\tabcolsep}{4pt}
\begin{tabular}{@{}llrrrrr@{}}
\toprule
Mode & Controller & $W_1\downarrow$ & $W_2\downarrow$ & MMD$\downarrow$ & Residual & Ancestors\\
\midrule
Proposal & Anchor & 0.474 & 0.646 & 0.048 & 614.37 & 2000.0\\
 & Direct LOO & 0.465 & 0.636 & 0.068 & 1777.69 & 2000.0\\
\rowcolor{oursrow} & \ours{} & 0.442 & 0.575 & 0.073 & 103.24 & 2000.0\\
\midrule
SMC & Anchor & 0.367 & 0.510 & 0.048 & 31.56 & 338.7\\
 & Direct LOO & 0.435 & 0.631 & 0.059 & 25.62 & 630.7\\
\rowcolor{oursrow} & \ours{} & 0.316 & 0.464 & 0.042 & 25.09 & 568.5\\
\bottomrule
\end{tabular}
\end{table}

In this calibration ablation, RCPT has lower mean $W_1$ and $W_2$ than Direct LOO in both deployment modes and lower mean MMD under SMC. Direct LOO has lower Proposal-Only MMD and retains more SMC ancestors on average ($630.7$ versus $568.5$). These distinct rankings separate endpoint accuracy from genealogy.

\subsection{Common-fit realized-ray and adaptive-regularization audit}
\label{app:sec:ray-audit}

\paragraph{Protocol and evaluation law.}
This audit isolates the calibration decision on identical fitting states.
Thirty raw-VCG pilot fits use the 12-field Checker dictionary, fitting
populations $N\in\{8,16,32\}$, and ten paired seeds per budget. At each of 200 native steps,
RCPT and Direct LOO use the same fitted direction. LOO-selected ridge and
PCR use the same fitting states; PCR recomputes its basis within deletion
folds. Selection uses fitting residuals only. No endpoint metric selects a
ridge level, rank, or amplitude. Coefficient clipping and calibration
fallback are disabled in this audit, so scalar rules remain on the fitted
ray. This is a separate cohort from the primary fixed-schedule comparison
and its matched calibration ablation.

Independent reference states separate calibration from evaluation. Intermediate
normalized target densities are unavailable for the learned expert paths.
We therefore use two independent corrected Anchor-SMC systems of 16,384
particles each, with separately normalized weights. Both systems resample
five times; minimum pre-resampling ESS is 9,102--9,229. Their combined
32,768-particle weighted law $\mu_t$ defines the reference risk. This is
neither exact target integration nor iid target sampling. Disagreement
between systems measures simulation sensitivity; it does not bound shared
model bias or the target-reference covariance gap in
Appendix~\ref{app:sec:reference-transfer}.

The audit uses 21 fixed rounded indices from
$\operatorname{linspace}(0,199,21)$. Each native step is assigned to its
nearest audit node; the resulting cell counts divided by 200 give time
weights $w_k$. Reported Risk/Anchor is the ratio of
$\sum_k w_k\Var_{\mu_{t_k}}(g^0_{t_k}+a_{t_k}^\top\lambda_{t_k})$
to the corresponding Anchor risk, summarized over the ten fitting seeds.
Bootstrap intervals resample paired fitting seeds, not time nodes. They
describe variation over fits conditional on this reference construction.

Table~\ref{app:tab:ray-risk} in the main text reports this comparison.

\paragraph{Adaptive regularization.}
RCPT has the lowest mean reference risk among the implemented rules at
all three fitting populations (Table~\ref{app:tab:ray-risk}). At $N=16$,
its paired Risk/Anchor differences from Direct LOO, ridge, and PCR are
$-0.6331$, $-0.3375$, and $-0.6804$, with respective 95\% bootstrap intervals
$[-1.363,-0.0669]$, $[-0.7484,-0.0510]$, and $[-1.182,-0.2518]$.
At $N=8$ and 32, the intervals against Direct LOO and ridge include zero;
the intervals against PCR remain below zero. These are pairwise intervals
over ten fits, not simultaneous guarantees. The results support scalar
calibration at low particle budgets without establishing dominance over
all directional estimators. PCR changes a learned basis and rank; it does
not exhaust grouped or coordinatewise shrinkage.

\paragraph{Realized calibration.}
For each fit and node, replace $p$ by $\mu_t$ in
Eq.~\eqref{app:eq:realized-ray-risk} to obtain $q_\mu$, $\kappa_\mu$, and
the reference-ray oracle. Direct variance evaluation agrees with the
quadratic at $\alpha\in\{0,0.25,0.5,0.75,1\}$ to maximum absolute error
$2.328\times10^{-10}$. None of the 630 fit--time cells requires the
near-zero-curvature convention, whose tolerance is
$\max\{10^{-14},10^{-10}\max(V_\mu(0),1)\}$.

The structured ingredients are not quantitatively calibrated pointwise.
Median $S/q_\mu$ rises from $0.105$ to $0.276$ and $0.606$ as $N$ grows;
$\kappa_\mu<0$ in 56/630 cells (Table~\ref{app:tab:ray-calibration}).
A nonnegative penalty cannot match those negative covariances exactly.
Nevertheless, mean integrated ray regret is $8.359$, $4.500$, and $3.776$
for RCPT, versus $12.900$, $27.960$, and $4.635$ for Direct LOO.
Direct LOO has smaller mean absolute amplitude error at $N=8$ and 32,
but not smaller regret. Curvature weights amplitude errors differently
across times, and boundary optima add a one-sided term. This distinction
explains why unweighted amplitude accuracy and decision quality need not
have the same ordering.

\begin{table}[!htb]
\caption{\textbf{Realized quadratic diagnostics.}
All quantities use the independent corrected-SMC reference law.
Spearman coefficients summarize the seed-level $S$--$q_\mu$ and
$\widehat P$--$\kappa_\mu$ diagnostics. Their limited agreement precludes
interpreting the structured ingredients as accurate pointwise estimates.}
\label{app:tab:ray-calibration}
\centering\small
\setlength{\tabcolsep}{7pt}
\begin{tabular}{@{}rrrrr@{}}
\toprule
$N$ & Median $S/q_\mu$ & $\kappa_\mu<0$ & $\rho(S,q_\mu)$ & $\rho(\widehat P,\kappa_\mu)$\\
\midrule
8 & 0.105 & 4.8\% & 0.418 & 0.406\\
16 & 0.276 & 6.2\% & -0.188 & 0.103\\
32 & 0.606 & 15.7\% & 0.115 & 0.673\\
\bottomrule
\end{tabular}
\end{table}

\paragraph{What the oracle calculation establishes.}
The expected-risk oracle and the realized calibration bound answer
different questions. Applying the $H,P,B$ identity to the empirical
distribution of ten fits gives an oracle gain over the better endpoint
larger than the two-system sensitivity in 22/63 population-size--time
cells: 6/21, 6/21, and 10/21. This diagnoses
Theorem~\ref{thm:shrinkage}'s expected-risk calculation on the reference
law. It is not the frequency with which
Theorem~\ref{thm:structured-calibration}'s plug-in error bound is
informative. The latter additionally requires control of realized
approximation errors and target-reference transfer. No finite-sample
certificate for the implemented selector follows from this audit.

\paragraph{Endpoint deployment and completeness.}
Each frozen schedule generates 10,000 paired trajectories in Proposal-Only
and corrected-SMC modes. Exact discrete optimal transport estimates $W_2$
on a predeclared paired 2,000-sample metric subset.
Table~\ref{app:tab:ray-endpoints} retains all attempted denominators.
RCPT completes 60/60 deployments, compared with 45/60 for VCG and 59/60
for PCR. Means for incomplete arms condition on completion and are not
unconditional performance comparisons.

\begin{table}[!htb]
\caption{\textbf{Common-fit Checker endpoints.} Mean $W_2\pm$ standard
error across completed seeds. Finite gives the completion count over every planned seed;
lineages are SMC terminal ancestors out of 10,000, averaged over completed
runs. The reference-risk analysis includes all 30 fits per method, including those whose endpoint deployment failed.}
\label{app:tab:ray-endpoints}
\centering\footnotesize
\setlength{\tabcolsep}{3.2pt}
\begin{tabular}{@{}clrrrrr@{}}
\toprule
& & \multicolumn{2}{c}{Proposal-Only} & \multicolumn{3}{c}{Corrected SMC}\\
\cmidrule(lr){3-4}\cmidrule(lr){5-7}
$N$ & Controller & $W_2\downarrow$ & Finite & $W_2\downarrow$ & Finite & Lineages\\
\midrule
8 & Anchor & $0.7725\pm0.0136$ & 10/10 & $0.5773\pm0.0290$ & 10/10 & 1723\\
& VCG & $1.594\pm0.1779$ & 5/10 & $2.316\pm0.3144$ & 8/10 & 613.9\\
& Direct LOO & $0.9814\pm0.0422$ & 10/10 & $0.7267\pm0.0735$ & 10/10 & 1642\\
& LOO ridge & $1.082\pm0.0467$ & 10/10 & $0.7458\pm0.0871$ & 10/10 & 1631\\
& LOO PCR & $1.073\pm0.0755$ & 9/10 & $0.8080\pm0.1280$ & 10/10 & 1559\\
\rowcolor{oursrow}& \ours{} & $0.8360\pm0.0447$ & 10/10 & $0.6417\pm0.0372$ & 10/10 & 1873\\
\midrule
16 & Anchor & $0.7973\pm0.0159$ & 10/10 & $0.6602\pm0.0337$ & 10/10 & 1893\\
& VCG & $1.423\pm0.5909$ & 3/10 & $3.744\pm0.2502$ & 9/10 & 657.2\\
& Direct LOO & $0.9174\pm0.1168$ & 10/10 & $0.7979\pm0.1391$ & 10/10 & 1542\\
& LOO ridge & $0.9675\pm0.1051$ & 10/10 & $0.6963\pm0.0639$ & 10/10 & 1832\\
& LOO PCR & $0.9682\pm0.1157$ & 10/10 & $0.7585\pm0.0634$ & 10/10 & 1804\\
\rowcolor{oursrow}& \ours{} & $0.9062\pm0.1097$ & 10/10 & $0.7341\pm0.1011$ & 10/10 & 1592\\
\midrule
32 & Anchor & $0.8044\pm0.0188$ & 10/10 & $0.5990\pm0.0310$ & 10/10 & 1895\\
& VCG & $0.7991\pm0.0545$ & 10/10 & $1.731\pm0.5101$ & 10/10 & 2400\\
& Direct LOO & $0.7918\pm0.0486$ & 10/10 & $0.6553\pm0.0218$ & 10/10 & 2861\\
& LOO ridge & $0.8054\pm0.0455$ & 10/10 & $0.6068\pm0.0463$ & 10/10 & 3203\\
& LOO PCR & $0.8023\pm0.0390$ & 10/10 & $0.6067\pm0.0463$ & 10/10 & 3138\\
\rowcolor{oursrow}& \ours{} & $0.7851\pm0.0482$ & 10/10 & $0.6806\pm0.0299$ & 10/10 & 2671\\
\bottomrule
\end{tabular}
\end{table}

Residual improvement does not uniformly improve terminal transport.
RCPT has lower mean Proposal-Only $W_2$ than Direct LOO, ridge, and PCR
at each budget, but SMC favors ridge at $N=16$ and both ridge and PCR at
$N=32$. Paired RCPT-minus-Anchor $W_2$ intervals include zero in
Proposal-Only at every budget and in SMC at $N=8,16$.
At $N=32$, the SMC difference is $+0.0817$, with 95\% bootstrap interval
$[+0.0051,+0.1629]$. Reference risk, propagated transport energy, and
corrected finite-particle endpoints are different quantities; the
one-sided transport bounds do not order these observed metrics.

\paragraph{Selection cost.}
Mean selector time per 200-step schedule is $0.132$ seconds for RCPT,
$1.31$ for Direct LOO, $9.34$ for ridge selection, and $13.03$ for PCR.
These are selector timings within this implementation, not total sampling
times or hardware-independent complexity comparisons. Together with the
risk results, they support scalar shrinkage as an inexpensive decision
under limited fitting information. They do not establish its global
optimality among regularizers.

\subsection{Scaffold protocol}
\label{app:sec:scaffold-revised}

The fixed benchmark contains all 76 CrossDocked2020 scaffold--pocket
tasks~\citep{francoeur2020crossdocked}.  Its four pretrained expert paths are
EDM for the fragment and full ligand, GeoDiff for fragment conformations, and
pocket-conditioned DiffSBDD for the full ligand
\citep{hoogeboom2022equivariant,xu2022geodiff,schneuing2024diffsbdd}.
For each expert velocity $v_i$, the controller contains
\begin{equation}
 \begin{aligned}
 \phi_{i,0}=v_i,\qquad
 \phi_{i,1}=z_i v_i,\qquad
 \phi_{i,2}=\tfrac12(3z_i^2-1)v_i,\\
 r_i=\log q_i-I^{-1}\sum_{\ell=1}^I\log q_\ell,\\
 z_i=\tanh\!\left(\frac{r_i-\mu_i}{\sigma_i}\right),
 \end{aligned}
 \label{app:eq:scaffold-basis12}
\end{equation}
giving 12 relative-density-gated fields with bounded gates.  The degree-zero fields are
exactly the original mechanistic expert velocities.  The gate statistics are
fixed from the fitting cohort and never selected using decoded chemistry.

Each pocket uses 16 fitting particles with one fitting seed.  Anchor is the natural
ACE proposal; VCG is the ordinary 12-field regression; RCPT applies exact
weighted leave-one-out refitting at the same fitted times.  The resulting schedules are
frozen and deployed on 16 fresh paired trajectories per arm.  Proposal-Only
deployment retains neither correction weights nor resampling.  RDKit decoding,
scaffold retention, uniqueness, diversity, QED, SA, Lipinski summaries, and
QuickVina docking are computed only after deployment and never enter controller
selection.  Affinity-improved yield uses the fixed generation denominator;
``unique useful'' additionally deduplicates valid, scaffold-retaining,
reference-beating molecules. Docking success alone counts any candidate with
a successful docking result, even if it fails the separate chemical-validity
check, divided by all generated slots.  Paired pocket bootstrap intervals treat the 76
pockets as statistical units.  The main table uses the original complete
fit-16 deployment; an independent 64-trajectory replication reuses the same
frozen schedules and increases evaluation to 4,864 candidates per arm.

The common-state audit evaluates all 76 fitted schedules on identical
shared reference states, rather than their active deployment paths. VCG exceeds
Anchor risk in 75/76 pockets. RCPT improves on VCG in all 76 and repairs 72
of those 75 harmful updates; Direct LOO repairs 73. Mean Risk/Anchor is
25.9641 for VCG, 0.8935 for RCPT, and 0.7958 for Direct LOO. Direct LOO beats
RCPT in 73/76 pockets; the paired RCPT-minus-Direct difference is 0.0977
(95\% bootstrap CI [0.0622,0.1197]). The worst ratios are 1491.65, 2.00, and
3.02, respectively: the mean improvement and tail comparison differ.

\begin{table*}[!htb]
\caption{\textbf{High-precision 76-pocket scaffold deployment.}  The fixed
fit-16 schedules are independently deployed with 64 candidate slots per arm
and pocket (4,864 per arm).  This table isolates the VCG-to-RCPT comparison; downstream means remain close to Anchor. Risk is normalized by Anchor. Dock counts successful docking over all generated candidates, irrespective of chemical validity. Bold marks the better of the two reported controllers.}
\label{app:tab:scaffold-deploy64-confirmation}
\centering
\small
\setlength{\tabcolsep}{3.0pt}
\begin{tabular}{@{}lrrrrrr@{}}
\toprule
Controller & Validity$\uparrow$ & Scaffold$\uparrow$ & Risk/Anchor$\downarrow$ &
Dock success$\uparrow$ & Affinity$\uparrow$ & Unique useful$\uparrow$\tabularnewline
\midrule
VCG & 0.216 & 0.212 & 23.822 & 0.569 & 0.148 & 0.101\tabularnewline
\rowcolor{oursrow}\ours{} & \textbf{0.433} & \textbf{0.415} &
\textbf{0.878} & \textbf{0.904} & \textbf{0.292} & \textbf{0.224}\tabularnewline
\bottomrule
\end{tabular}
\end{table*}

Table~\ref{app:tab:scaffold-direct} reports this comparison.

Direct LOO lowers active-path Risk/Anchor by 0.1199 relative to RCPT
(95\% CI [$-0.1293,-0.1099$]), but its validity difference is $-0.0206$
([$-0.0485,0.0074$]) and unique useful yield difference is $-0.0148$
([$-0.0419,0.0115$]). All reported downstream paired intervals include zero.
The molecular implementation includes coefficient clipping and an empirical
fallback, so these deployed arms include the guards in
Appendix~\ref{app:sec:implementation}. The bundled three-later-seed Direct summary
reports validity 0.4252, scaffold yield 0.4068, Risk/Anchor 0.7576, docking 0.9191,
affinity yield 0.2944, and unique useful yield 0.2536. It is a different
aggregation from Table~\ref{app:tab:scaffold-direct}; matched three-seed
control summaries are unavailable, so it is not used for paired comparisons.

The separate ECG run covers the same 76-pocket manifest with 16 fitting
particles and 16 fresh outputs per pocket. The implementation requires the
\texttt{scores} basis: $\phi_i=\log q_t^{(i)}$ and
$\nabla\phi_i=\nabla\log q_t^{(i)}$, one potential per expert. It solves the
weighted Ritz system with centered potential--response covariance and the
uncentered gradient Gram matrix, then applies the configured coefficient clip.
It does not reject a finite ECG coefficient merely because its empirical VCG
residual increases. This four-potential class differs from the 12 gated
velocity fields; the endpoint comparison therefore assesses the instantiated
proposal methods, not two optimizers on an identical field class.
Table~\ref{app:tab:scaffold-budget} reports the separate controller-capacity ablation.
Against VCG-4, RCPT improves validity by $0.0510$ at fit 16 (paired 95\%
bootstrap interval $[0.0041,0.0987]$) and $0.0872$ at fit 64
($[0.0395,0.1349]$). At fit 64, its affinity-qualified and unique useful
yields improve by $0.0699$ ($[0.0329,0.1086]$) and $0.0650$
($[0.0288,0.1020]$), respectively. These gains establish downstream benefits
beyond reducing residual risk; VCG-4 actually has lower mean residual risk.
At fit 16, the unique useful yield gain is $0.0255$ with interval
$[-0.0148,0.0666]$, so that particular gain is not statistically resolved.

\subsection{QM9 steering protocol}
\label{app:sec:reward}

The pretrained EEGSDE model is conditioned on dipole moment and steered toward
polarizability and HOMO--LUMO-gap targets.  The 12-field class crosses the two
reward-gradient families with bounded time/state Legendre gates; all arms use
the same model, target, evaluator, resampling rule, and total trajectory budget.
Reported properties come from independently trained evaluators, not the
guidance networks.  Molecular identities are RDKit canonical isomeric SMILES.

The \emph{split} protocol uses 16 particles to fit one schedule and eight
independent SMC deployment batches of 16, for 144 trajectories in total.  The
\emph{online} protocol uses one live population of 144: coefficients are
refitted from that current population, and the same particles continue and
become outputs. Fitting uses all 144 current particles before the next stochastic
transition. The serialized \texttt{fit\_particles=16} option is inactive in
online mode. Hence online is a strong adaptive batch-generation baseline,
whereas split measures out-of-fit reuse of a frozen controller.  Minimum ESS
and surviving lineage counts are divided by batch size in the combined table,
but one 144-particle genealogy is not statistically identical to eight
independent 16-particle genealogies.  Both protocols contain eight paired fit
seeds and use leave-one-out refitting (mean trust 0.203 split and 0.945 online).

\begin{table}[!htb]
\caption{QM9 steering split means $\pm$ sample SD over eight paired fits.
Unique success counts distinct valid identities meeting both targets at the
unchanged one-MAD threshold per 128 generation slots.  Minimum ESS and
lineages are reported per 16-particle deployment batch.}
\label{app:tab:reward-split-sd}
\centering
\small
\setlength{\tabcolsep}{4pt}
\begin{tabular}{@{}lrrrr@{}}
\toprule
Split method & Eval. MAE & Unique success /128 & Min ESS & Lineages\\
\midrule
Anchor & $0.9171\pm0.0413$ & $20.75\pm6.27$ & $4.023\pm0.656$ & $5.297\pm1.44$\\
VCG & $0.9281\pm0.114$ & $13.88\pm5.79$ & $2.624\pm1.14$ & $2.578\pm1.49$\\
\rowcolor{oursrow}\ours{} & \textbf{$0.8997\pm0.0746$} &
\textbf{$23.00\pm8.05$} & \textbf{$4.207\pm1.54$} &
\textbf{$5.641\pm2.65$}\\
\bottomrule
\end{tabular}
\end{table}

\paragraph{ECG proposal comparison.}
The frozen ECG baseline uses the scalar potentials
\begin{equation}
\phi_{r,\ell}(x,t)=\nu_t\int_0^{e_r(x,t)}-sP_\ell(\tanh s)\,\dd s,
\qquad r\in\{\mathrm{polarizability},\mathrm{gap}\},\quad \ell=0,\ldots,5,
\label{app:eq:reward-ecg-potentials}
\end{equation}
where $e_r$ is target minus prediction in normalized property units and
$\nu_t$ is the one-step reverse-transition variance. Thirty-two-node
Gauss--Legendre quadrature evaluates the potentials; analytic gradients
supply the 12 gated fields. Potentials and fields share fit-batch RMS
normalization. At each fit, $(\widehat G+10^{-8}I)\theta
=\widehat{\Cov}_w(g^0,\phi)$, with
$\widehat G_{ij}=\E_w[\nabla\phi_i^\top\nabla\phi_j]$.
Coefficients are clipped to $\pm2\times10^{-4}$ without a residual-risk
acceptance rule. Sixteen particles fit a frozen schedule before eight
independent 16-particle deployment batches, using eight seeds and 1,000 steps.

\begin{table}[!htb]
\caption{\textbf{ECG completion and diversity diagnostics.} Frozen/deploy
values use eight seeds; $\pm$ denotes sample SD. The reduced online
DriftLite-style configuration has one completed seed. ESS and lineages are
absolute counts per batch. Success is distinct from identity-level unique success.}
\label{app:tab:qm9-ecg-diagnostics}
\centering\footnotesize
\setlength{\tabcolsep}{3pt}
\begin{tabular}{@{}lrrrrrrr@{}}
\toprule
ECG configuration & Seeds & MAE$\downarrow$ & Success$\uparrow$ & Valid & Unique & Min ESS & Lineages\\
\midrule
Frozen/deploy, 12 fields & 8 & $0.8500\pm0.1858$ & $0.4049\pm0.2276$ & 0.9150 & 0.0761 & 1.0 & 1.0\\
Reduced online & 1 & 1.2414 & 0.0000 & 1.0000 & 0.0347 & 1.0 & 1\\
\bottomrule
\end{tabular}
\end{table}

ECG's property scores conceal severe collapse in frozen deployment.
Every batch retains exactly one ancestor. Mean uniqueness is 0.0761,
and mean resampling count is 1,105.5 per seed across eight batches,
or 138.19 per batch. The reported deployed residual ratio is approximately
$3.58\times10^5$. The 0.4049 evaluator success statistic counts repeated
outputs; it cannot be substituted for the number of distinct successful
molecules. Table~\ref{tab:reward-qm9} therefore reports distinct successful identities and
lineage fractions, while Table~\ref{app:tab:reward-split-sd} retains the
available identity-level counts for Anchor, VCG, and RCPT.

The original 12-field online ECG did not complete because its controller
geometry became nonfinite. A reduced implementation closest to the
DriftLite online construction completed one seed, shown in
Table~\ref{app:tab:qm9-ecg-online}. Its error exceeds Anchor and it has zero
success. This is a configuration-specific feasibility result, not an
eight-seed estimate or a general ordering of ECG and VCG objectives.

\begin{table}[!htb]
\caption{\textbf{Completed online ECG audit.} One paired seed, $N=144$.
The Anchor, VCG, and RCPT values come from the same evaluation run;
the reduced ECG result is from the accompanying online audit.
The failed 12-field ECG run is not assigned a finite score.}
\label{app:tab:qm9-ecg-online}
\centering\small
\setlength{\tabcolsep}{7pt}
\begin{tabular}{@{}lrrr@{}}
\toprule
Controller & Eval. MAE$\downarrow$ & Success$\uparrow$ & Uniqueness$\uparrow$\\
\midrule
Anchor & 1.1307 & 0.0000 & 0.0069\\
VCG & 0.8422 & 0.3736 & \textbf{1.0000}\\
ECG & 1.2414 & 0.0000 & 0.0347\\
\rowcolor{oursrow}\ours{} & \textbf{0.8379} & \textbf{0.3788} & \textbf{1.0000}\\
\bottomrule
\end{tabular}
\end{table}

\paragraph{Online calibration ablation.}
Direct LOO replaces the structured amplitude in the same 12-field online
pipeline. Its eight-seed cohort has no matched frozen/deploy run.
Table~\ref{app:tab:qm9-calibration} isolates this change from the main
proposal comparisons. One Direct LOO seed resamples once and retains 84
of 144 ancestors; the other seven retain all 144. Direct-minus-RCPT MAE is
0.0025 (95\% paired bootstrap CI [$-0.0017,0.0074$]). The endpoint difference
is unresolved at this precision. Runtime is not compared across separately
executed cohorts.

Table~\ref{app:tab:qm9-calibration} reports the online calibration ablation.

\paragraph{Regularized Direct ablation.}
Write the exact deleted-loss quadratic as $L_D(\alpha)=A+2b\alpha+c\alpha^2$
and the structured amplitude as $\alpha_S$. The exploratory selector minimizes
\begin{equation}
\alpha_R=\arg\min_{\alpha\in[0,1]}
\{L_D(\alpha)+\gamma c(\alpha-\alpha_S)^2\}.
\label{app:eq:regularized-direct}
\end{equation}
For $c>0$, its unconstrained solution is
$(-b/c+\gamma\alpha_S)/(1+\gamma)$; clipping to $[0,1]$ is performed once.
The run uses $\gamma=1$ and disables componentwise coefficient clipping.
Numerically vanishing curvature selects Anchor.

\begin{table}[!htb]
\caption{\textbf{Exploratory regularized Direct LOO.} One fresh online seed, $N=144$. This tests an additional regularized selector, not the main RCPT method; its small error advantage is not replicated evidence.}
\label{app:tab:reward-regularized}
\centering\small
\setlength{\tabcolsep}{4pt}
\begin{tabular}{@{}lrrrrr@{}}
\toprule
Controller & MAE & Success & Unique /144 & Resamples & Ancestors\\
\midrule
Anchor & 1.3244 & 0.0000 & 0 & 70 & 1\\
VCG & 0.8132 & 0.4541 & 45 & 0 & 144\\

Direct LOO & 0.8080 & 0.4441 & 46 & 0 & 144\\
Reg. Direct ($\gamma=1$) & 0.8044 & 0.4538 & 46 & 0 & 144\\
\rowcolor{oursrow}\ours{} & 0.8099 & 0.4594 & 46 & 0 & 144\\
\bottomrule
\end{tabular}
\end{table}

\subsection{CIFAR-10 class-conditional image generation}
\label{app:sec:image-dlsm}

\paragraph{Mechanism and physical deployment.}
The pretrained DLSM models are fixed. The unit-power mechanism study uses
$\beta=\gamma=1$, all ten classes, and five fitting seeds. Nested fitting
cohorts contain 8, 16, or 32 forward-noised training images at 16 midpoint
times; independent test cohorts contain 128 states per class and time.
Models, states, and realized Hutchinson probes are paired across methods.
The corrected response is Eq.~\eqref{app:eq:dlsm-anchor-response}.
This audit is separate from the $\beta=\gamma=2.5$ terminal cohort.

\begin{table}[!htb]
\caption{\textbf{CIFAR-10 unit-power mechanism study.} Fifty class--seed cells per fit budget, $\beta=\gamma=1$. Risk is held-out Risk/Anchor; harm means a ratio above one. RCPT improves on VCG in 46/50, 47/50, and 43/50 cells at fits 8, 16, and 32.}
\label{app:tab:image-dlsm}
\centering\small
\setlength{\tabcolsep}{8pt}
\begin{tabular}{@{}llrr@{}}
\toprule
Fit & Controller & Mean Risk/Anchor$\downarrow$ & Harm$\downarrow$\\
\midrule
8 & VCG & 1.140 & 46/50\\
\rowcolor{oursrow}8 & \ours{} & 0.997 & 13/50\\
\midrule
16 & VCG & 2.164 & 46/50\\
\rowcolor{oursrow}16 & \ours{} & 1.090 & 22/50\\
\midrule
32 & VCG & 2.695 & 36/50\\
\rowcolor{oursrow}32 & \ours{} & 1.522 & 23/50\\
\bottomrule
\end{tabular}
\end{table}

RCPT reduces mean risk and the number of harmful cells at all three budgets.
Protection remains incomplete: its mean risk exceeds Anchor at fits 16 and 32.
A separate five-seed, fit-16 audit matches the terminal target and Anchor
($\beta=\gamma=2.5$; Table~\ref{app:tab:image-matched-risk}). RCPT has only
5/50 harmful cells, compared with 22/50 for VCG-9 and 13/50 for Direct LOO,
and the smallest worst ratio among fitted methods. VCG-4 has the lowest mean
risk; RCPT does not improve on VCG-9's mean in this setting. All deviations
from Anchor are small. These common-state surrogate risks describe behavior
under the terminal configuration; they are not pooled with the unit-power audit.

\paragraph{Metrics and cohort boundaries.}
Each generation cohort contains 32 images per class and seed. A separate
ResNet-20 supplies independent accuracy and classwise polynomial-kernel KID
against 256 test images per class. DLSM accuracy uses the guidance classifier itself, so it measures agreement
with the guider rather than independent class fidelity. We therefore use
ResNet accuracy as the independent classification test and interpret DLSM
accuracy as a guider self-consistency diagnostic. Clean-FID uses Inception-v3 and all 10,000 test images. DINOv2-small
uses normalized CLS features: FD is pooled, and classwise RBF MMD uses a
target-only median-distance bandwidth. CLIP ViT-B/32 class margin is target
similarity minus the largest non-target similarity, averaged over four fixed
prompt templates. None enters controller fitting. Pooled FD/FID and means of
per-seed FD/FID differ substantially at these sample sizes and are kept apart.

\paragraph{Image-generation protocol.}
The terminal comparison uses seed 261012, target power $\beta=2.5$, and
matched Anchor guidance $\gamma=2.5$. Each method generates 32 outputs per
class, or 320 images. The protocol uses fit 16/evaluate 128 states at 16
schedule nodes and PFO-500, with no Langevin corrector. All methods retain
the same fixed $+1$ target-class coefficient on the scale-1.5 base; VCG-9,
Direct LOO, and RCPT fit the remaining nine auxiliary fields. Thus the
Anchor--RCPT comparison isolates the fitted auxiliary correction under a
shared target-class boost. VCG-4 restricts the fitted correction to four
fitting-covariance directions. Table~\ref{app:tab:image-scale15} reports the
calibration ablation; Table~\ref{app:tab:image-matched-risk} provides the
replicated capacity diagnostic. The single-seed endpoint comparison does
not estimate between-fit uncertainty.

\paragraph{Endpoint interpretation.}
RCPT improves six of seven metrics over matched Anchor and ties DLSM
accuracy. Independent accuracy increases from 0.6625 to 0.6750 (four
additional correct predictions), KID falls from 0.2688 to 0.2407, and
Clean-FID falls from 89.043 to 88.232. RCPT also improves six metrics over
VCG-9, which has higher DLSM accuracy. Relative to Direct LOO, RCPT has
higher independent accuracy and CLIP margin and lower Clean-FID; Direct
LOO has better DLSM accuracy, KID, DINO FD, and classwise DINO MMD.
Thus RCPT improves a broad range of endpoints without uniformly dominating
the direct calibration ablation. These are observed single-seed comparisons,
not seed-level significance claims. FID and DINO FD compare Gaussian feature
summaries; they are not the image-space $W_2$ in Theorem~\ref{thm:terminal}.

\paragraph{Qualitative contrasts.}
Figure~\ref{app:fig:cifar-qualitative} shows three class--sample
pairs from the terminal cohort. RCPT's evaluator prediction
matches the target in each example, while Anchor and VCG-9 do not.
The panel illustrates the corresponding classification changes.

\begin{figure}[!htb]
\centering
\includegraphics[width=0.8\textwidth]{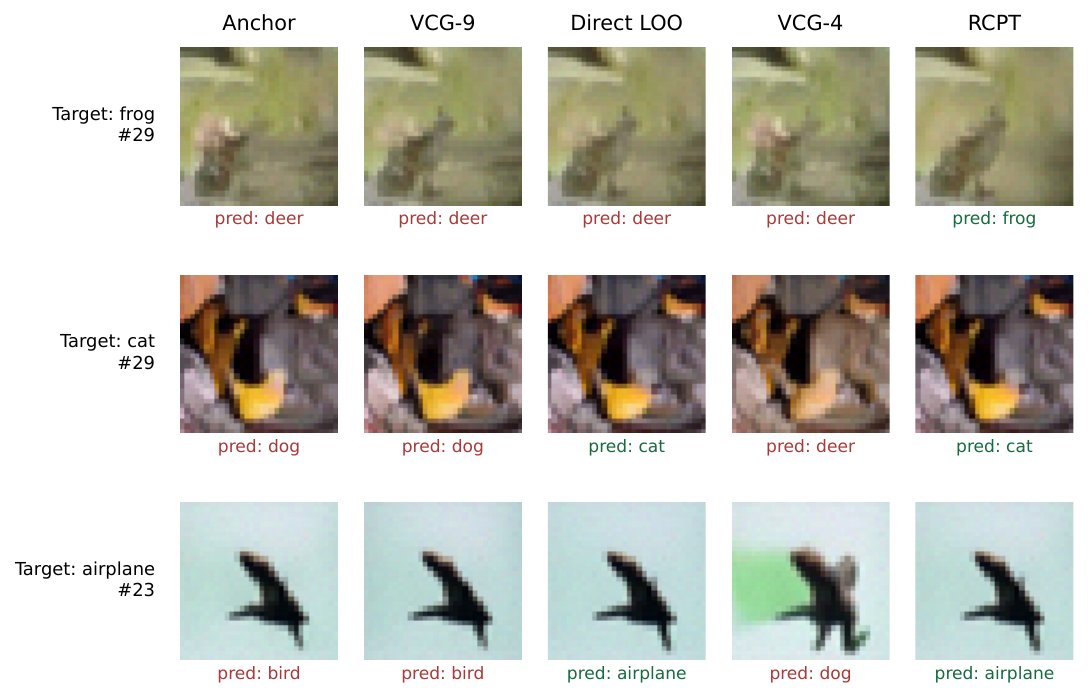}
\caption{\textbf{CIFAR-10 classification contrasts.} Three class--sample pairs
from seed 261012 under the matched $\beta=\gamma=2.5$ protocol. Labels are
independent evaluator predictions. Direct LOO is the calibration ablation;
VCG-4 is the capacity ablation. Aggregate endpoint metrics use all 320 outputs.}
\label{app:fig:cifar-qualitative}
\end{figure}

\subsection{Ten-pocket scaffold fixed-rule comparison}
\label{app:sec:scaffold-simple-baselines}

We compare eleven prespecified controllers on the same ten scaffold--pocket
targets.  Every arm uses one $N_{\rm fit}=16$ fit (one fitting seed) and a paired,
independent $N_{\rm deploy}=16$ Proposal-Only deployment per pocket.  Anchor,
VCG, and RCPT are the immutable matched rows from the
full-76-pocket study; the eight fixed-scalar, ridge, EMA, and norm-cap controls
reuse the same fitted information and deployment seeds.  No decoded molecule
or docking result tunes a controller.  The ten pockets are the equally
weighted experimental units.

\begin{table*}[!htb]
\caption{\textbf{Scaffold simple-baseline comparison.}  Equal-pocket means
over the matched ten-pocket fit-16/deploy-16 subset.  ``Affinity'' is the
fixed-denominator yield that both retains the scaffold and beats the pocket
reference; ``unique useful'' additionally deduplicates molecular identities.
Risk is measured on each method's active deployment path because the optional
common-state audit did not complete; it is therefore a mechanism diagnostic,
not a same-state counterfactual. Bold marks the best value within each of the two controller groups, including ties.}
\label{app:tab:scaffold-simple-baselines}
\centering
\small
\setlength{\tabcolsep}{3.2pt}
\begin{tabular}{@{}lrrrrrrrr@{}}
\toprule
Controller & Risk/Anchor & Valid & Scaffold & Affinity & Unique useful & QED &
SA & Best QVina\\
 & $\downarrow$ & $\uparrow$ & $\uparrow$ & $\uparrow$ & $\uparrow$ &
$\uparrow$ & $\uparrow$ & $\downarrow$\\
\midrule
Anchor & 1.000 & 0.463 & 0.450 & 0.363 & 0.263 & \textbf{0.517} & 0.653 & \textbf{\boldmath $-7.95$}\\
VCG & 13.895 & 0.319 & 0.319 & 0.294 & 0.200 & 0.409 & 0.584 & $-7.66$\\
\rowcolor{oursrow}\ours{} & \textbf{0.890} & \textbf{0.563} & \textbf{0.513} & \textbf{0.413} & \textbf{0.306} & 0.489 & \textbf{0.662} & $-7.93$\\
\midrule
Fixed $\alpha=0.25$ & 1.915 & 0.563 & 0.544 & 0.456 & 0.356 & \textbf{0.548} & 0.663 & $-7.94$\\
Fixed $\alpha=0.50$ & 5.116 & \textbf{0.588} & \textbf{0.569} & \textbf{0.506} & \textbf{0.406} & 0.529 & 0.629 & \textbf{\boldmath $-8.15$}\\
Fixed $\alpha=0.75$ & 11.483 & 0.406 & 0.400 & 0.350 & 0.256 & 0.509 & 0.631 & $-8.10$\\
Ridge $\rho=10^{-3}$ & 18.750 & 0.281 & 0.281 & 0.256 & 0.163 & 0.475 & 0.593 & $-7.87$\\
Ridge $\rho=10^{-2}$ & 18.429 & 0.238 & 0.238 & 0.231 & 0.138 & 0.483 & 0.593 & $-7.75$\\
Ridge $\rho=10^{-1}$ & 20.740 & 0.275 & 0.275 & 0.238 & 0.144 & 0.382 & 0.548 & $-7.69$\\
VCG+EMA $\beta=0.9$ & 2.991 & 0.269 & 0.269 & 0.244 & 0.150 & 0.441 & 0.596 & $-7.53$\\
VCG+norm cap $0.25$ & \textbf{0.994} & 0.544 & 0.538 & 0.456 & 0.356 & 0.507 & \textbf{0.664} & $-8.02$\\
\bottomrule
\end{tabular}
\end{table*}

Table~\ref{app:tab:scaffold-simple-baselines} confirms that RCPT
repairs VCG: validity rises from 0.319 to 0.563, scaffold yield from 0.319 to
0.513, and active-path risk falls from 13.895 to 0.890.  It is not uniformly
best among simple rules on this small subset.  Fixed $\alpha=0.50$ has higher
scaffold, affinity-improved, and unique-useful yields, with paired bootstrap
intervals excluding zero, while the norm cap is numerically strong on several
endpoints.  Conversely, $\alpha=0.75$, every ridge setting, and EMA deteriorate
substantially.  The result supports adaptive negative-transfer protection,
not dominance over an oracle-chosen fixed scalar for every subset.

\subsection{QM9 steering fixed-rule comparison}
\label{app:sec:reward-simple-baselines}

The reward audit uses four seeds, 1,000 integration steps, 16 fitting
particles, and eight 16-particle corrected deployment microbatches (128 slots
per controller and seed).  It uses the normalized three-field class consisting
of the two reward-gradient fields and reverse base drift.  The first deployment
microbatch shares the fitting random stream and the remaining seven are fresh,
so this audit supplements rather than replaces the independently seeded
eight-seed, 12-field primary result in Table~\ref{tab:reward-qm9}.

\begin{table*}[!htb]
\caption{\textbf{QM9 steering simple-baseline comparison.}  Means over four
fit seeds; MAE reports mean $\pm$ seed standard deviation.  Unique success
counts distinct valid molecular identities meeting both targets at the
unchanged one-MAD threshold per 128 generation slots, so duplicates and
invalid samples consume budget.  Risk is the integrated selected/Anchor
residual-variance ratio on each controller's own deployment genealogy and is
not a common-state comparison. Bold marks the best value within each of the two controller groups, including ties.}
\label{app:tab:reward-simple-baselines}
\centering
\small
\setlength{\tabcolsep}{6pt}
\begin{tabular}{@{}lrrr@{}}
\toprule
Controller & Risk/Anchor$\downarrow$ & MAE$\downarrow$ &
Unique success /128$\uparrow$\\
\midrule
Anchor & 1.000 & $0.9346\pm0.0331$ & 20.5\\
VCG & 0.242 & $0.8499\pm0.0205$ & 33.5\\
\rowcolor{oursrow}\ours{} & \textbf{0.224} & \textbf{\boldmath $0.8252\pm0.0462$} & \textbf{39.2}\\
\midrule
Fixed $\alpha=0.25$ & 1.215 & $0.9678\pm0.0996$ & 20.5\\
Fixed $\alpha=0.50$ & 0.330 & $0.8781\pm0.0298$ & 29.2\\
Fixed $\alpha=0.75$ & 0.086 & $0.8593\pm0.0597$ & 35.2\\
Ridge $\rho=10^{-3}$ & 0.157 & $0.8892\pm0.0341$ & 31.8\\
Ridge $\rho=10^{-2}$ & 0.068 & $0.8237\pm0.0438$ & 38.5\\
Ridge $\rho=10^{-1}$ & \textbf{0.041} & \textbf{\boldmath $0.8136\pm0.0638$} & \textbf{39.8}\\
VCG+EMA $\beta=0.9$ & 1.746 & $0.8637\pm0.0639$ & 32.2\\
VCG+norm cap $0.25$ & 0.045 & $0.8412\pm0.0413$ & 33.2\\
\bottomrule
\end{tabular}
\end{table*}

Table~\ref{app:tab:reward-simple-baselines} shows a complementary boundary.
No fixed scalar improves mean MAE over VCG, whereas RCPT lowers
MAE from 0.8499 to 0.8252 and raises unique success from 33.5 to 39.2 while
slightly lowering active-path risk.  Fixed ridge at $\rho=10^{-1}$ leads MAE and unique success,
but its effect is nonmonotone: $\rho=10^{-3}$ is worse than
VCG in both metrics.  With
only four seeds, all paired MAE intervals against VCG include zero; this audit
therefore establishes a fixed-rule sensitivity, not statistically decisive
reward-domain superiority.

\subsection{Bias and penalty-estimator audits}
\label{app:sec:assumption-audits}

\paragraph{Coefficient bias.}
The expected-risk oracle retains
$B=(\lambda^\star)^\top C\,\E[\widehat\lambda-\lambda^\star]$; the structured
model does not estimate it separately.  The Checker audit uses the 12-field basis, $N=8$, 8,192
reference particles, 10,000 independent fitting ensembles, 80 time points,
and the matched ridge convention.  The mean bias-aware and no-bias trust are
0.1900 and 0.1934; their mean absolute pointwise difference is 0.0176 (median
0.0077, 95th percentile 0.0661, maximum 0.1071).  Ignoring $B$ increases the
integrated oracle risk by 0.781\%.  This supports a practical simplification on
Checker, not the general claim $B=0$ or $B=o(N^{-1})$.

\paragraph{Penalty approximation.}
The large-population Checker audit uses five trajectories, 8,192 population
particles per trajectory, 40 fitted times, and 500 repeated fits at
$N\in\{16,32,64,128,256\}$.  Across 200 seed--time problems, the structured
proxy and $N^{-1}\tr(\Omega C^\dagger)$ have Spearman correlation 0.893, but
their integrated scale ratio is only 0.167.  At $N=256$, measured penalty is
$1.071\times$ the $\Omega$-based prediction while the structured proxy
is low by $6.289\times$.  At $N=16$, neither first-order approximation is
quantitatively accurate.

\paragraph{Realized calibration is a separate empirical question.}
Neither preceding expected-penalty audit estimates the realized $\kappa$ required by
Theorem~\ref{thm:structured-calibration}. The corresponding target-law quantities are
$q=\Var_p(a^\top\widehat\lambda)$,
$\kappa=\Cov_p(g^0+a^\top\widehat\lambda,a^\top\widehat\lambda)$, and
$\alpha_{\rm ray}=\Pi_{[0,1]}(1-\kappa/q)$ when $q>0$.
The relevant errors are $S-q$ and $\widehat P-\kappa$, together with
selected-amplitude error, realized ray regret, and negative-transfer
frequency.
When $\kappa<0$, the ray oracle retains the full update, and a nonnegative
penalty has error at least $-\kappa$. Appendix~\ref{app:sec:ray-audit} measures these quantities on an independent corrected-SMC reference. It finds lower selected-ray regret despite substantial proxy error. The reference is approximate and dependent within each particle system, so the audit does not certify the target-law bound for Algorithm~\ref{alg:rcpt}.

\begin{table}[!htb]
\caption{\textbf{Covariance-penalty ablation.} Ten paired Checker seeds.
Replacing the structured effective-rank proxy with a full $\Omega$ estimate
gives no consistent endpoint gain. This changes the penalty estimator,
separately from the Direct LOO amplitude ablation.}
\label{app:tab:loo-omega}
\centering\small
\setlength{\tabcolsep}{7pt}
\begin{tabular}{@{}llrrr@{}}
\toprule
Mode & Fit & Estimator & $W_2\downarrow$ & MMD$\downarrow$\\
\midrule
SMC & 8 & Full $\Omega$ & 2.010 & \textbf{2.917}\\
\rowcolor{oursrow}SMC & 8 & \ours{} & \textbf{1.784} & 3.181\\
\midrule
SMC & 16 & Full $\Omega$ & 2.170 & 2.213\\
\rowcolor{oursrow}SMC & 16 & \ours{} & \textbf{2.030} & \textbf{2.069}\\
\midrule
Proposal & 8 & Full $\Omega$ & 1.769 & 2.519\\
\rowcolor{oursrow}Proposal & 8 & \ours{} & \textbf{1.768} & 2.519\\
\midrule
Proposal & 16 & Full $\Omega$ & 1.469 & 1.275\\
\rowcolor{oursrow}Proposal & 16 & \ours{} & \textbf{1.443} & \textbf{1.269}\\
\bottomrule
\end{tabular}
\end{table}

\section{Additional Mechanism Diagnostics}
\label{app:sec:extended-discussion}

The diagnostics below supplement the main figures with fitting-budget, paired-risk, and genealogy measurements.

\subsection{Mechanism diagnostics}
\label{app:sec:mechanism}

The fitting-population sweep contains ten paired Checker SMC runs at each of
$N_{\rm fit}\in\{4,8,16,32,64,128,256\}$, with 10,000 evaluation particles from a fresh deployment population. All 70 runs per method completed. Figure~\ref{fig:fit-deploy-main}
compares the integrated fitting and independent residual ratios, each with its
own matched Anchor denominator. Low-particle VCG nearly interpolates its
fitting states, although its independent risk can be orders of magnitude
larger. RCPT gives up some fitting improvement to reduce independent risk.
At fit 8 the full sweep has median risks 105.22 and 0.966, and mean risks
524.17 and 1.084 for VCG and RCPT; the mean reflects a remaining RCPT tail.
This fitting-population sweep is separate from the fixed-schedule endpoint comparison in Table~\ref{tab:six-baseline-2d}.

Figure~\ref{app:fig:information} displays the same sweep against particle count
on both panels. The median support ratios $N_{\rm fit}/r_{\rm eff}$ for RCPT
are 1.33, 1.15, 1.48, 2.80, 5.50, 10.88, and 21.75; the effective rank itself
changes at small $N$. Mean amplitude rises from 0.143 at $N=4$ to 0.921 at
$N=256$, while VCG and RCPT independent risks converge. The method therefore retains more of the fitted update as the independent risks converge; this observation does not identify the population-optimal headroom.

\begin{figure}[!htb]
\centering
\includegraphics[width=0.7\textwidth]{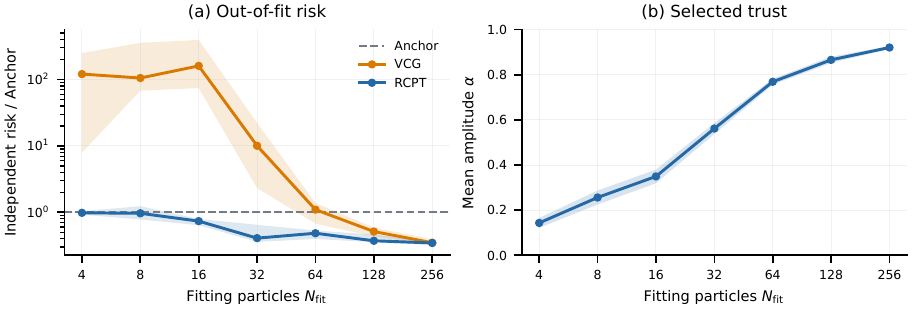}
\caption{\textbf{Increasing fitting information closes the gap to VCG.}
Risk curves show medians and interquartile ranges; amplitude shows mean
and standard error over ten seeds at each particle count.}
\label{app:fig:information}
\end{figure}

Figure~\ref{app:fig:repair-retention} compares independent risk across the
paired methods. RCPT improves 66/70 Checker runs and repairs 34/47 harmful
VCG runs to Anchor or below. The separate Scaffold common-state audit contains
75 pockets; pocket 45 was reserved for the engineering smoke test before this
audit. RCPT improves every pocket and repairs 68/73 harmful VCG schedules.
This diagnostic cohort is distinct from the full 76-pocket deployment table.
The ratios are not pooled across domains.

\begin{figure}[!htb]
\centering
\includegraphics[width=0.6\textwidth]{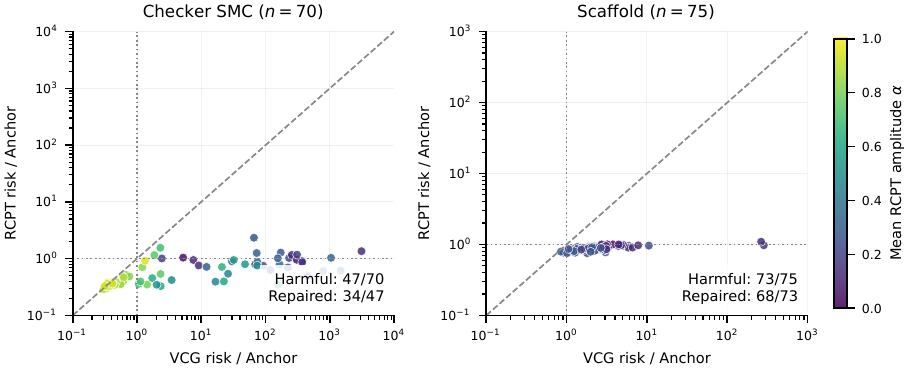}
\caption{\textbf{Risk repair and retained adaptation.} Points below the diagonal
favor RCPT. Points to the right of the vertical Anchor line and below the
horizontal one are harmful VCG fits repaired to Anchor or below. Color denotes
mean selected amplitude, not terminal quality.}
\label{app:fig:repair-retention}
\end{figure}

\begin{figure}[!htb]
\centering
\includegraphics[width=0.6\textwidth]{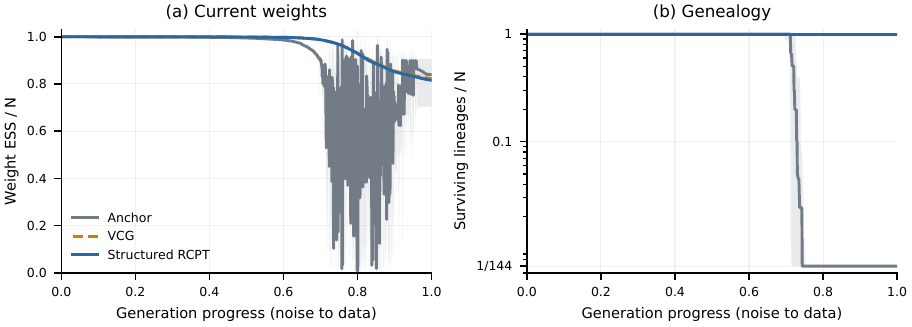}
\caption{\textbf{Weight ESS and genealogy in online QM9 steering.} Median and
interquartile range across eight $N=144$ seeds, ordered in generation time.
Anchor resamples 85.8 times on average and ends with one ancestor in every
seed; VCG and RCPT never resample and retain all 144. Resampling can restore
weight ESS without restoring lost ancestry. These are observed path-dependent
outcomes, not a universal consequence of local residual ordering.}
\label{app:fig:reward-genealogy}
\end{figure}

\subsection{Selected amplitude across tasks}
\label{app:sec:alpha}

Figure~\ref{fig:alpha-tasks} summarizes time-resolved amplitudes. Thick lines are means across
units, bands are interquartile ranges, and thin lines are individual schedules.
The summary averages the saved time nodes without claiming a common
continuous-time integral. Native time runs in opposite generation directions
in some domains. Amplitude is relative to each fitted coefficient; it is not
a comparable drift norm or an independent measurement of population headroom.

Figure~\ref{fig:alpha-tasks} contains 76 Scaffold pockets, eight online QM9
seeds with 144 live particles, ten Checker schedules, and ten CIFAR schedules.
The updated Checker panel shows strong early retention followed by smaller
amplitudes, while the CIFAR panel is mostly conservative. The CIFAR panel displays ten schedules; the replicated risk audit
aggregates 50 class--seed cells. These curves describe the rule's response within each plotted
cohort; they do not identify a terminal-optimal amplitude.

\subsection{Residual risk as a terminal surrogate}

Across 83 finite Proposal-Only profiles, integrated held-out residual has
Spearman correlation $0.775$ with empirical TV (Pearson $0.881$ after log
scaling its many-order-of-magnitude range), while the theorem-aligned
$B_1=\tfrac12\int_0^1\sqrt{V_t}\dd t$ has Spearman correlation $0.730$.
Retaining the time-resolved profile matters: $B_1$ is a median $47\%$ tighter
than the additional time--Cauchy relaxation
$B_2=\tfrac12\sqrt{\int_0^1V_t\dd t}$.  Nevertheless, every evaluated $B_1$
exceeds one and its median is $5.99\times$ empirical TV.  We therefore use
residual risk as a correlated comparative diagnostic, not an exact terminal
error estimate.

\subsection{Cross-domain portability and fixed controls}

The CIFAR-10 study in Appendix~\ref{app:sec:image-dlsm} extends the
finite-particle mechanism to a pretrained image model. Its independent-state risk audits and separate terminal cohorts answer
different questions: residual generalization and downstream behavior. The
current terminal comparison uses one seed with matched target power and Anchor guidance 2.5.

Appendix Tables~\ref{app:tab:scaffold-simple-baselines} and
\ref{app:tab:reward-simple-baselines} compare fixed shrinkage, ridge, and
engineering safeguards.  The best fixed heuristic does not transfer: a fixed
scalar can be strong on selected scaffold yields, while no fixed scalar
improves QM9 steering MAE over VCG; ridge behavior is similarly task-dependent
and nonmonotone.  RCPT is the common arm that estimates trust from the
available fit rather than selecting a task-specific regularization value.

\section{Computational Cost}
\label{app:sec:cost}

After expert fields and divergences are available, VCG costs
$O(Nm^2+m^3)$ per fitted step. Leave-one-out refitting reuses those features and adds
small dense linear algebra, but no expert, denoiser, or evaluator call.
Table~\ref{tab:cost-main} reports matched costs with hardware; all times
exclude downstream docking and offline evaluation.

Scaffold timing means cover all 76 pockets on an RTX A6000. QM9 times use four A100-SXM4 runs; runs on other hardware are excluded. The 1,000-step QM9 runs predated memory instrumentation, so peak memory comes from a matched 10-step probe. Small VCG--RCPT timing reversals reflect run variability. RCPT adds 3.13 seconds over VCG in approximately 3,880 seconds (0.081\%) with equal probed memory; the recorded Anchor time is 3877.460 seconds.

\paragraph{Why the small-particle regime matters computationally.}
RCPT's main systems benefit is not that leave-one-out refitting makes one trajectory
cheaper; it is that a controller can be fit on the small ensemble the user
actually wants to generate. Table~\ref{app:tab:cost-particles} measures this
scaling in our implementations.

\begin{table}[!htb]
\caption{Particle scaling for the 12-field scaffold RCPT fit on NVIDIA RTX
A6000 48GB GPUs, averaged over the complete 76-pocket benchmark. The \textit{Solve}
column is aggregate proposal-controller solve time; the remainder is dominated
by trajectory and field/model computation.}
\label{app:tab:cost-particles}
\centering
\small
\begin{tabular}{@{}rrrr@{}}
\toprule
$N_{\rm fit}$ & s/pocket & Peak GB & Solve s\\
\midrule
16 & 197.445 & 4.740 & 5.282\\
32 & 268.852 & 9.365 & 5.473\\
64 & 419.193 & 18.611 & 5.478\\
\bottomrule
\end{tabular}
\end{table}

From 16 to 64 scaffold particles, allocated VRAM rises by $3.93\times$ and
wall time by $2.12\times$, whereas aggregate controller-solve time changes by
only 0.20 seconds. On the same A100, matched QM9 steering probes require 4.569 GB
at $N=16$ and 40.204 GB at $N=144$ ($8.80\times$ memory for $9\times$ as many
live particles). These are direct measurements of particle scaling, not an
extrapolation to thousands of particles.

These matched measurements support calibration at low particle budgets:
12-field controllers fitted from 8--16 particles add negligible computation
over VCG, while larger fitting populations increase model cost and memory.
They do not measure a speedup over DriftLite's published high-particle
experiments~\citep{ren2026driftlite}.

A frozen schedule also amortizes its fitting time over independent outputs.
Deployment remains microbatchable: it retains the fields needed to apply the
combination while removing online covariance, LOO, and particle-synchronization
work.

\end{document}